\begin{document}

\title{Distributionally Ambiguous Optimization for\\ Batch Bayesian Optimization}

\author{\name Nikitas Rontsis \email nikitas.rontsis@eng.ox.ac.uk \\
	\name Michael A. Osborne \email mosb@robots.ox.ac.uk \\
	\name Paul J. Goulart \email paul.goulart@eng.ox.ac.uk \\
	\addr Department of Engineering Science\\
	University of Oxford\\
	Oxford, OX1 3PN UK
}

% \editor{Roberto Calandra, Roman Garnett, Javier González, Frank Hutter, Bobak Shahriari}

\maketitle

\begin{abstract}
	We propose a novel, theoretically-grounded, acquisition function for Batch Bayesian optimization informed by insights from distributionally ambiguous optimization. Our acquisition function is a lower bound on the well-known Expected Improvement function, which requires evaluation of a Gaussian Expectation over a multivariate piecewise affine function.   Our bound is computed instead by evaluating the best-case expectation over all probability distributions consistent with the same mean and variance as the original Gaussian distribution. Unlike alternative approaches, including Expected Improvement, our proposed acquisition function avoids multi-dimensional integrations entirely, and can be computed exactly -- even on large batch sizes -- as the solution of a tractable convex optimization problem. Our suggested acquisition function can also be optimized efficiently, since first and second derivative information can be calculated inexpensively as by-products of the acquisition function calculation itself. We derive various novel theorems that ground our work theoretically and we demonstrate superior performance via simple motivating examples, benchmark functions and real-world problems.
\end{abstract}

\begin{keywords}
	Bayesian Optimization, Convex Optimization, Distributionally Robust Optimization, Batch Optimization, Black-Box Optimization.
\end{keywords}

\section{Introduction}
When dealing with numerical optimization problems in engineering applications, one is often faced with the optimization of an expensive process that depends on a number of tuning parameters. Examples include the outcome of a biological experiment, the training of large scale machine learning algorithms or the outcome of exploratory drilling.  We are concerned with problem instances wherein there is the capacity to perform $k$ experiments in \emph{parallel} and, if needed, repeatedly select further batches with cardinality $k$ as part of some sequential decision making process. Given the cost of the process, we wish to select the parameters at each stage of evaluations carefully so as to optimize the process using as few experimental evaluations as possible.

It is common to assume a surrogate model for the outcome $f: \mathbb{R}^n \mapsto \mathbb{R}$ of the process to be optimized. This model, which is based on both prior assumptions and past function evaluations, is used to determine a collection of $k$ input points for the next set of evaluations. Bayesian Optimization provides an elegant surrogate model approach and has been shown to outperform other state-of-the-art algorithms on a number of challenging benchmark functions \citep{Jones2001}. It models the unknown function $f$ with a Gaussian Process (GP) \citep{Rasmussen2005}, a probabilistic function approximator which can incorporate prior knowledge such as smoothness, trends, etc.

A comprehensive introduction to GPs can be found in \cite{Rasmussen2005}. In short, modeling a function with a GP amounts to modeling the function as a realization of a stochastic process. In particular, we assume that the outcomes of function evaluations are normally distributed random variables with known \emph{prior} mean function $m : \mathbb{R}^n \mapsto \mathbb{R}$ and \emph{prior} covariance function $\kappa : \mathbb{R}^n \times \mathbb{R}^{n} \mapsto \mathbb{R}$. Prior knowledge about $f$, such as smoothness and trends, can be incorporated through judicious choice of the covariance function $\kappa$, while the mean function $m$ is commonly assumed to be zero.
%PG: but the above is exactly why you DON'T do in the example.
A training dataset $\mathcal{D} = (X^d, y^d)$ of $\ell$ past function evaluations $y^d_i = f(X^d_i)$ for $i = 1\dots \ell$, with $y^d \in \mathbb{R}^\ell, X^d \in \mathbb{R}^{\ell \times n}$ is then used to calculate the \emph{posterior} of $f$.

The celebrated GP regression equations \citep{Rasmussen2005} give the posterior
\begin{equation}
	y | \mathcal{D} \sim \mathcal{N}(\mu(X),
	\Sigma(X))
	\label{eqn:GPPrediction}
\end{equation}
on a batch of $k$ test locations $X \in \mathbb{R}^{k \times n}$ as a multi-variate normal distribution in a closed form formula. The posterior mean value $\mu$ and variance $\Sigma$ depend also on the dataset $\mathcal{D}$, but we do not explicitly indicate this dependence in order to simplify the notation.  Likewise, the posterior $y | \mathcal{D}$ is a normally distributed random variable whose mean $\mu(X)$ and covariance $\Sigma(X)$ depend on $X$, but we do not make this explicit.

Given the surrogate model, we wish to identify some selection criterion for choosing the next batch of points to be evaluated. Such a selection criterion is known as an \emph{acquisition function} in the terminology of Bayesian Optimization. Ideally, such an acquisition function would take into account the number of remaining evaluations that we can afford, e.g.\ by computing a solution via dynamic programming to construct an optimal sequence of policies for future batch selections. However, a probabilistic treatment of such a criterion is computationally intractable, involving multiple nested minimization-marginalization steps \citep{Gonzalez2015}.

To avoid this computational complexity, myopic acquisition functions that only consider the one-step return are typically used instead. For example, one could choose to minimize the one-step \emph{Expected Improvement} (described more fully in \S \ref{sec:expected_improvement}) over the best evaluation observed so far, or maximize the probability of having an improvement in the next batch over the best evaluation. Other criteria use ideas from the bandit \citep{Desautels2014} and information theory \citep{Shah2015} literature. In other words, the intractability of the multistep lookahead problem has spurred instead the introduction of a wide variety of myopic heuristics for batch selection.

\subsection{Expected improvement} % (fold)
\label{sec:expected_improvement}
We will focus on the (one-step) Expected Improvement criterion, which is a standard choice and has been shown to achieve good results in practice \citep{Snoek2012}. In order to give a formal description we first require some definitions related to the optimization procedure of the original process. At each step of the optimization procedure, define $y^d \in \mathbb{R}^\ell$ as the vector of $\ell$ past function values evaluated at the points $X^d \in \mathbb{R}^{\ell \times n}$, and $X \in \mathbb{R}^{k \times n}$ as a candidate set of $k$ points for the next batch of evaluations. Then the classical expected improvement acquisition function is defined as
\begin{equation}
	\begin{gathered}
		\alpha(X) = \mathbb{E}[\min(y_{1}, \dots, y_{k}, \underline{y^d}) |
		\mathcal{D}] - \underline{y^d} \\
		\quad \text{with} \; y | \mathcal{D} \sim \mathcal{N}\bigl(\mu(X),
		\Sigma(X)\bigr)
		\label{eqn:ExpectedImprovement}
	\end{gathered}
\end{equation}
where $\underline{y^d}$ is the element-wise minimum of $y^d$, i.e.\ the minimum value of the function $f$ achieved so far by any known function input. In the above definition we assume perfect knowledge of $\underline{y^d}$, which implies a noiseless objective. A noisy objective requires the introduction of heuristics discussed in detail in \cite{Picheny2013}. For the purposes of clarity, a noiseless objective is assumed for the rest of the document. This is not constraining, as most of the heuristics discussed in \citep{Picheny2013} are compatible with the theoretical analysis presented in the rest of the paper.

Selection of a batch of points to be evaluated with optimal expected improvement amounts to finding some $X \in \arg\min\left[\alpha(X)\right].$ Unfortunately, direct evaluation of the acquisition function $\alpha$ requires the $k$--dimensional integration of a piecewise affine function; this is potentially a computationally expensive operation. This is particularly problematic for gradient-based optimization methods, wherein $\alpha(X)$ may be evaluated many times when searching for a minimizer. Regardless of the optimization method used, such a minimizer must also be computed again for every step in the original optimization process, i.e.\ every time a new batch of points is selected for evaluation. Therefore a tractable acquisition function should be used. In contrast to \eqref{eqn:ExpectedImprovement}, the acquisition function we will introduce in Section \ref{sec:main_result} avoids expensive integrations and can be calculated efficiently with standard software tools.

\subsection{Related work} % (fold)

Despite the intractability of \eqref{eqn:ExpectedImprovement}, \citet{Chevalier2013} presented an efficient way of approximating $\alpha$ and its derivative $d\alpha/dX$ \citep{Marmin2015} by decomposing it into a sum of $q$--dimensional Gaussian Cumulative Distributions, which can be calculated efficiently using the seminal work of \citet{Genz2009}. There are two issues with this approach: First the number of calls to the $q$--dimensional Gaussian Cumulative Distribution grows quadratically with respect to the batch size $q$, and secondly, there are no guarantees about the accuracy of the approximation or its gradient. Indeed, approximations of the multi-point expected improvement, as calculated with the \texttt{R} package \texttt{DiceOptim} \citep{Roustant2012} can exhibit points of failure in simple examples (see Figure \ref{fig:qei_problem} and Table \ref{table:Timings}). To avoid these issues, \citet{Gonzalez2016} and \citet{Ginsbourger2009} rely on heuristics to derive a multi-point criterion. Both methods choose the batch points in a greedy, sequential way, which restricts them from exploiting the interactions between the batch points in a probabilistic manner. Finally Parallel Predictive Entropy Search (PPES), a non-greedy information theoretic acquisition function was derived by \citet{Shah2015}.

\section{Distributionally ambiguous optimization for Bayesian optimization}
\label{sec:main_result}
We now proceed to the main contribution of the paper. We draw upon ideas from the Distributionally Ambiguous Optimization community to derive a novel, tractable, acquisition function that lower bounds the expectation in~\eqref{eqn:ExpectedImprovement}. Our acquisition function:
\begin{itemize}
	\itemsep0.1em
	\item is theoretically grounded;
	\item is numerically reliable and consistent, unlike Expected Improvement-based alternatives (see \S\ref{sec:empirical_analysis});
	\item is fast and scales well with the batch size; and
	\item provides first and second order derivative information inexpensively.
\end{itemize}
In particular, we use the GP posterior \eqref{eqn:GPPrediction} derived from the GP to determine the mean $\mu(X)$ and variance $\Sigma(X)$ of $y | \mathcal{D}$ given a candidate batch selection $X$, but we thereafter ignore the Gaussian assumption and consider only that $y | D$ has a distribution embedded within a family of distributions $\mathcal{P}$ that share the mean $\mu(X)$ and covariance $\Sigma(X)$ calculated by the standard GP regression equations. In other words, we define
\[
	\mathcal{P}(\mu,\Sigma) = \set{\mathbb P}{\mathbb{E}_\mathbb{P}[\xi] = \mu,
		\mathbb{E}_\mathbb{P}[\xi\xi^T] = \Sigma}.
\]
We denote the set $\mathcal{P}(\mu(X),\Sigma(X))$ simply as $\mathcal{P}$ where the context is clear.   Note in particular that $\mathcal{N}(\mu,\Sigma) \in \mathcal{P}(\mu,\Sigma)$ for any choice of mean $\mu$ or covariance $\Sigma$.

One can then construct lower and upper bound for the Expected Improvement by minimizing or maximizing over the set $\mathcal{P}$ respectively, i.e.\ by writing
\begin{equation}
	\inf_{\mathbb{P} \in \mathcal{P}} \mathbb{E}_{\mathbb{P}}[g(\xi)]
	\leq \mathbb{E}_{\mathcal{N}(\mu, \Sigma)}[g(\xi)]
	\leq \sup_{\mathbb{P} \in \mathcal{P}} \mathbb{E}_{\mathbb{P}}[g(\xi)]
	\label{eqn:ExpectedImprovementBounds}
\end{equation}
where the random vector $\xi \in \mathbb{R}^k$ and the function $g : \mathbb{R}^k \mapsto \mathbb{R}$ are chosen such that $\alpha(X) = \mathbb{E}_{\mathcal{N}(\mu, \Sigma)}[g(\xi)]$, i.e. $\xi = y|\mathcal{D}$ and
\begin{equation}\label{eqn:definition_g}
	g(\xi) = \min(\xi_1, \dots, \xi_k, \underline{y^d}) - \underline{y^d}.
\end{equation}
Observe that the middle term in \eqref{eqn:ExpectedImprovementBounds} is equivalent to the expectation in~\eqref{eqn:ExpectedImprovement}.

Perhaps surprisingly, both of the bounds in~\eqref{eqn:ExpectedImprovementBounds} are computationally tractable even though they seemingly require optimization over the infinite-dimensional (but convex) set of distributions $\mathcal{P}$. For either case, these bounds can be computed exactly via transformation of the problem to a tractable, convex semidefinite optimization problem using distributionally ambiguous optimization techniques \citep{Zymler2013}.

We will focus on the lower bound $\inf_{\mathbb{P} \in \mathcal{P}} \mathbb{E}_{\mathbb{P}}[g(\xi)]$ in \eqref{eqn:ExpectedImprovementBounds}, hence adopting an \emph{optimistic} modeling approach. This is because the upper bound is of limited use as it can be shown to be trivial to evaluate and independent of $\Sigma$. Hence, we informally assume that the distribution of function values is such that it yields the lowest possible expectation compatible with the mean and covariance computed by the GP, which we put together in the second order moment matrix $\Omega$ of the posterior as
\begin{equation}
	\Omega \eqdef
	%\left[\begin{smallmatrix}  %for inlined version
	\begin{bmatrix} %regular version
		\Sigma + \mu\mu^T & \mu \\
		\mu^T             & 1
	\end{bmatrix} %regular version
	%\end{smallmatrix}\right] %for inlined version
	\label{eqn:Omega}.
\end{equation}
We will occasionally write this explicitly as $\Omega(X)$ to highlight the dependency of the second order moment matrix on $X$.

The following result says that the lower (i.e.\ optimistic) bound in \eqref{eqn:ExpectedImprovementBounds} can be computed via the solution of a convex optimization problem whose objective function is linear in $\Omega$:

\begin{theorem}\label{thm:ValueTheorem}
	For any $\Sigma \succ 0$ the optimal value of the semi-infinite optimization problem
	\[
		\inf_{\mathbb{P} \in \mathcal{P}(\mu, \Sigma)} \mathbb{E}_{\mathbb{P}}[g(\xi)]
	\]
	coincides with the optimal value of the following semidefinite program:
	\begin{equation}
		\begin{aligned}
			p(\Omega) \eqdef \; &\sup
			& & \langle \Omega, M \rangle\\
			  & \text{s.t.} &   & M - C_i \preceq 0, \quad i = 0,\dots,k,
		\end{aligned}
		\label{eqn:PrimalOptProb}
		\tag{$P$}
	\end{equation}
	where $M \in \mathbb{S}^{k+1}$ is the decision variable, and $C_0 \eqdef 0,$
	\begin{equation}
		C_i \eqdef
		\left[
			\begin{array}{cc}
				0       & e_i/2            \\
				e_i^T/2 & -\underline{y^d}
			\end{array}
		\right],
		\qquad i = 1,\dots,k,
		\qquad
		\label{eqn:PrimalConstraints}
	\end{equation}
	are auxiliary matrices defined using %the best evaluation so far
	$\underline{y^d}$ and the standard basis vectors $e_i$ in $\mathbb{R}^k$.
\end{theorem}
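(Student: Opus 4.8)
The plan is to recognise \eqref{eqn:PrimalOptProb} as the conic (Lagrangian) dual of the moment problem and then to establish that there is no duality gap. First I would pass to the lifted variable $\zeta \eqdef (\xi, 1) \in \mathbb{R}^{k+1}$. A direct computation gives $\zeta^T C_i \zeta = \xi_i - \underline{y^d}$ for $i = 1,\dots,k$ and $\zeta^T C_0 \zeta = 0$, so that
\[
	g(\xi) = \min(\xi_1,\dots,\xi_k,\underline{y^d}) - \underline{y^d} = \min(\xi_1 - \underline{y^d},\dots,\xi_k - \underline{y^d},0) = \min_{0 \le i \le k} \zeta^T C_i \zeta ,
\]
i.e.\ $g$ is a pointwise minimum of $k+1$ quadratic forms in $\zeta$. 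Likewise, a distribution $\mathbb{P}$ lies in $\mathcal{P}(\mu,\Sigma)$ exactly when it is a nonnegative measure with $\mathbb{E}_\mathbb{P}[\zeta\zeta^T] = \Omega$ (the $(k{+}1,k{+}1)$ entry encoding $\mathbb{E}_\mathbb{P}[1] = 1$), and since $\Sigma \succ 0$ forces finite second moments we have $\mathbb{E}_\mathbb{P}\lvert g(\xi)\rvert \le \sum_{i=1}^k \mathbb{E}_\mathbb{P}\lvert \xi_i - \underline{y^d}\rvert < \infty$, so the objective is always well defined.

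Next I would record weak duality, which is the easy one-line direction. If $M$ is feasible for \eqref{eqn:PrimalOptProb}, then $M - C_i \preceq 0$ for every $i$ gives $\zeta^T M \zeta \le \min_{0\le i \le k}\zeta^T C_i \zeta = g(\xi)$ for all $\xi$, hence for any $\mathbb{P} \in \mathcal{P}(\mu,\Sigma)$,
\[
	\mathbb{E}_\mathbb{P}[g(\xi)] \ge \mathbb{E}_\mathbb{P}[\zeta^T M \zeta] = \langle \mathbb{E}_\mathbb{P}[\zeta\zeta^T], M \rangle = \langle \Omega, M \rangle ,
\]
and taking the infimum over $\mathbb{P}$ and the supremum over feasible $M$ yields $p(\Omega) \le \inf_{\mathbb{P}\in\mathcal{P}(\mu,\Sigma)} \mathbb{E}_\mathbb{P}[g(\xi)]$. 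To see that \eqref{eqn:PrimalOptProb} is genuinely the Lagrangian dual (and not merely a relaxation), I would dualise the constraint $\mathbb{E}_\mathbb{P}[\zeta\zeta^T] = \Omega$ with multiplier $M \in \mathbb{S}^{k+1}$, obtaining the Lagrangian $\langle \Omega, M \rangle + \int \bigl(g(\xi) - \zeta^T M \zeta\bigr)\, d\mathbb{P}(\xi)$; minimising over nonnegative measures $\mathbb{P}$ gives $-\infty$ unless $g(\xi) \ge \zeta^T M \zeta$ for all $\xi$, in which case the infimum is $\langle \Omega, M\rangle$. Finally I would convert the semi-infinite constraint $g(\xi) \ge \zeta^T M\zeta\ \forall \xi$ into the linear matrix inequalities of \eqref{eqn:PrimalOptProb}: using $g = \min_i \zeta^T C_i\zeta$ it is equivalent to $z^T(C_i - M) z \ge 0$ for all $z$ with $z_{k+1} = 1$ and every $i$, and this is equivalent, by scaling and continuity, to $C_i - M \succeq 0$, i.e.\ $M - C_i \preceq 0$.

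The crux is strong duality: that the inequality above is in fact an equality, with the optimum attained. Here I would invoke the standard result on moment problems carrying second-order moment information \citep{Zymler2013}. The hypothesis $\Sigma \succ 0$ implies, via a Schur-complement argument applied to \eqref{eqn:Omega}, that $\Omega \succ 0$, which is precisely the Slater-type regularity condition ensuring that the conic primal–dual pair has no gap — the Gaussian $\mathcal{N}(\mu,\Sigma) \in \mathcal{P}(\mu,\Sigma)$ already witnesses that $\Omega$ lies in the interior of the cone of admissible moment matrices. I expect this to be the main obstacle: the weak-duality direction is essentially free, but the reverse inequality requires the measure-theoretic duality machinery (closedness of the relevant moment cone, or the explicit construction of a worst-case distribution supported on at most $k+1$ atoms), and it is exactly the assumption $\Sigma \succ 0$ that makes it go through. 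Once the absence of a gap is established, the identity $p(\Omega) = \inf_{\mathbb{P}\in\mathcal{P}(\mu,\Sigma)} \mathbb{E}_\mathbb{P}[g(\xi)]$ follows, completing the proof.
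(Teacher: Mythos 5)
Your proposal is correct and follows essentially the same route as the paper's proof: both cast the moment problem as a semi-infinite linear program over nonnegative measures, pass to the Lagrangian dual with multiplier $M \in \mathbb{S}^{k+1}$, convert the semi-infinite constraint $\zeta^T M \zeta \le g(\xi)$ into the LMIs $M - C_i \preceq 0$ using the decomposition $g = \min_i \zeta^T C_i \zeta$, and appeal to the strong-duality result of \citet{Zymler2013} under $\Omega \succ 0 \Leftrightarrow \Sigma \succ 0$. The only cosmetic difference is that you spell out the weak-duality inequality and the Lagrangian derivation explicitly, whereas the paper cites them directly from \citet{Zymler2013}.
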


\begin{proof}
	See Appendix \ref{app:PrimalOptProb}.
\end{proof}

Problem \eqref{eqn:PrimalOptProb} is a semidefinite program (SDP). SDPs are convex optimization problems with a linear objective and convex conic constraints (i.e. constraints over the set of symmetric matrices $\mathbb{S}^{k}$, positive semidefinite/definite matrices $\mathbb{S}^{k}_{+}$/$\mathbb{S}^{k}_{++}$) and can be solved to global optimality with standard software tools~\citep{Sturm1999, SCS2016}. We therefore propose the computationally tractable acquisition function
\begin{equation*}
	\bar \alpha(X) \eqdef p\bigl(\Omega(X)\bigr) \leq \alpha (X) \quad \forall X
	\in
	\mathbb{R}^{k\times n},
\end{equation*}
which we will call \emph{Optimistic Expected Improvement (OEI)}, as it is an optimistic variant of the Expected Improvement function in \eqref{eqn:ExpectedImprovement}.

This computational tractability comes at the cost of inexactness in the bounds \eqref{eqn:ExpectedImprovementBounds}, which is a consequence of minimizing over a set of distributions containing the Gaussian distribution as just one of its members. Indeed, it can be proved that the minimizing distribution is discrete with $k+1$ possible outcomes and can be constructed by the Lagrange multipliers of \eqref{eqn:PrimalOptProb} \citep{VanParysThesis2015}. We show in \S\ref{sec:empirical_analysis} that this inexactness is of limited consequence in practice and it mainly renders the acquisition function more explorative. Nevertheless, there remains significant scope for tightening the bounds in  \eqref{eqn:ExpectedImprovementBounds} via imposition of additional convex constraints on the set $\mathcal{P}$, e.g.\ by restricting $\mathcal{P}$ to symmetric or unimodal distributions \citep{VanParys2015}.   Most of the results in this work would still apply, \emph{mutatis mutandis}, if such structural constraints were to be included.

In contrast to the side-effect of inexactness, the distributional ambiguity is useful for integrating out the uncertainty of the GP's hyperparameters efficiently for our acquisition function. Given $q$ samples of the hyperparameters, resulting in $q$ second order moment matrices $\{\Omega_i\}_{i=1,\dots,q}$, we can estimate the resulting second moment matrix $\tilde \Omega$ of the marginalized, non-Gaussian, posterior as
\begin{equation*}
	\tilde \Omega \approx \frac{1}{q}\sum_{i=1}^q \Omega_i.
\end{equation*}
Due to the distributional ambiguity of our approach, both bounds of \eqref{eqn:ExpectedImprovementBounds} can be calculated directly based on $\tilde \Omega$, hence avoiding multiple calls to the acquisition function.

Although the value of $p(\Omega)$ for any fixed $\Omega$ is computable via solution of an SDP, the non-convexity of the GP posterior \eqref{eqn:GPPrediction} that defines the mapping $X \mapsto \Omega(X)$ means that $\bar \alpha (X) = p \bigl(\Omega(X) \bigr)$ is still non-convex in $X$. This is unfortunate, since we ultimately wish to minimize $\bar \alpha(X)$ in order to identify the next batch of points to be evaluated experimentally.

However we can still optimize $\bar \alpha$ locally via non-linear programming. We will establish that a second order method is applicable by showing that $\bar \alpha (X)$ is twice differentiable under mild conditions. Such an approach would also be efficient as the hessian of $\bar \alpha$ can be calculated inexpensively. To show these results we will begin by considering the differentiability of $p$ as a function of $\Omega$.
\begin{theorem}\label{thm:gradientTheorem}
	The optimal value function $p : \mathbb{S}^{k+1}_{++} \mapsto \mathbb{R}$ defined in problem \eqref{eqn:PrimalOptProb} is differentiable on
	its domain with $\partial p(\Omega)/ \partial \Omega = \bar M(\Omega)$,
	where $\bar M(\Omega)$ is the unique optimal solution of
	\eqref{eqn:PrimalOptProb} at $\Omega$.
\end{theorem}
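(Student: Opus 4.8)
The plan is to recognise $p$ as the support function of a fixed spectrahedron, reduce differentiability at $\Omega$ to uniqueness of the maximiser of $(P)$, and then prove that uniqueness using strong SDP duality together with $\Omega\succ0$. First I would fix $\Omega\in\mathbb{S}^{k+1}_{++}$ and write $p(\Omega)=\sup_{M\in\mathcal F}\langle\Omega,M\rangle$ with $\mathcal F\eqdef\{M\in\mathbb{S}^{k+1}:M\preceq C_i,\ i=0,\dots,k\}$, a closed convex set not depending on $\Omega$. As a pointwise supremum of linear functionals, $p$ is convex. It is finite on $\mathbb{S}^{k+1}_{++}$: $(P)$ is feasible (e.g. $M=-tI$ for $t$ large, since each $C_i$ has bounded spectrum), and the dual is feasible (take $\Lambda_i=\Omega/(k+1)\succeq0$, whence $\sum_i\Lambda_i=\Omega$), so weak duality bounds $p(\Omega)$ from above; alternatively one may invoke Theorem~\ref{thm:ValueTheorem} and note that $g$ has $\mathbb{E}_{\mathbb P}[g]$ confined to a bounded interval under the fixed second moment $\Omega$. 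Moreover the supremum is attained: since $C_0=0$ forces $M\preceq0$, writing $\langle\Omega,M\rangle=\operatorname{tr}(\Omega^{1/2}M\Omega^{1/2})$ with $\Omega^{1/2}M\Omega^{1/2}\preceq0$ confines the eigenvalues of the latter to $[c,0]$ on the superlevel set $\{M\in\mathcal F:\langle\Omega,M\rangle\ge c\}$, so that set is closed and bounded (here $\Omega\succ0$ is invertible), and Weierstrass applies. Both facts use $\Omega\succ0$ essentially, which is why the domain is $\mathbb{S}^{k+1}_{++}$.

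Next I would invoke the standard description of the subdifferential of a finite support function (equivalently Danskin's theorem on a neighbourhood of $\Omega$, where the coercivity estimate above keeps the argmax in a fixed compact set): $\partial p(\Omega)=\arg\max_{M\in\mathcal F}\langle\Omega,M\rangle$. A convex function is differentiable precisely where its subdifferential is a singleton, and then the gradient equals that singleton. So the theorem reduces to showing that $(P)$ has a \emph{unique} maximiser for every $\Omega\succ0$, and I expect this to be the main obstacle.

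For uniqueness, observe that $(P)$ and its dual $\inf\{\sum_{i=1}^k\langle C_i,\Lambda_i\rangle:\sum_{i=0}^k\Lambda_i=\Omega,\ \Lambda_i\succeq0\}$ are both strictly feasible ($M=-tI$ on the primal side; $\Lambda_i=\Omega/(k+1)\succ0$ on the dual side), so strong duality holds with both optima attained, and complementary slackness gives $\Lambda_i^\star(C_i-M^\star)=0$, i.e. $\operatorname{range}(\Lambda_i^\star)\subseteq\ker(C_i-M^\star)$, for a primal optimal $M^\star$ and dual optimal $\{\Lambda_i^\star\}$. Suppose $M_1\neq M_2$ are both optimal; set $M^\star\eqdef(M_1+M_2)/2$ (still optimal) and $\Delta\eqdef(M_1-M_2)/2\neq0$. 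The whole segment $[M_1,M_2]$ lies in $\mathcal F$, so $C_i-M^\star\pm s\Delta\succeq0$ for all small $s\ge0$ and all $i$. Using the elementary fact that, for $N\succeq0$, one has $N+t\Delta\succeq0$ for all $t$ near $0$ iff $\ker N\subseteq\ker\Delta$, I conclude $\ker(C_i-M^\star)\subseteq\ker\Delta$ for every $i$, hence $\operatorname{range}(\Lambda_i^\star)\subseteq\ker\Delta$. Since $\ker\big(\sum_i\Lambda_i^\star\big)=\bigcap_i\ker\Lambda_i^\star$ for PSD summands, $\operatorname{range}(\Omega)=\operatorname{range}\big(\sum_i\Lambda_i^\star\big)=\sum_i\operatorname{range}(\Lambda_i^\star)\subseteq\ker\Delta$. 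But $\Omega\succ0$, so $\ker\Delta=\mathbb{R}^{k+1}$, i.e. $\Delta=0$: contradiction. Therefore the maximiser is unique, and denoting it $\bar M(\Omega)$ we obtain that $p$ is differentiable at $\Omega$ with $\partial p(\Omega)/\partial\Omega=\bar M(\Omega)$.

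The one ingredient needing a little care is the PSD-cone fact quoted above; I would state and prove it as a short lemma, by block-diagonalising $N=\operatorname{diag}(0,N_2)$ with $N_2\succ0$ relative to $\ker N\oplus(\ker N)^\perp$, writing $\Delta$ in the same blocks, and deducing from a Schur-complement computation that the $\ker N$--$\ker N$ block of $\Delta$ (forced by the quadratic form of $N+t\Delta$ restricted to $\ker N$) and its off-diagonal block must both vanish. Everything else is routine bookkeeping with ranges and kernels of symmetric positive semidefinite matrices.
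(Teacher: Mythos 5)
Your proof is correct, and it reaches the key step by a genuinely different route than the paper. Both arguments share the same skeleton: $p$ is convex and finite on the open set $\mathbb{S}^{k+1}_{++}$, and differentiability at $\Omega$ reduces to the maximizer of \eqref{eqn:PrimalOptProb} being unique --- the paper gets this via \citet[Lemma 3.3]{Goldfarb1999} together with Rockafellar's criterion that linear directional derivatives imply differentiability, whereas you use the cleaner observation that $p$ is the support function of the fixed spectrahedron $\mathcal F=\{M : M\preceq C_i,\ i=0,\dots,k\}$, so that $\partial p(\Omega)$ \emph{is} the argmax face (your attainment argument via $M\preceq 0$ and invertibility of $\Omega^{1/2}$ is sound). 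The real divergence is in the uniqueness proof. The paper first develops structural lemmas --- Lemma \ref{lem:PositiveDefinite} ($M_{11}\prec 0$ for every feasible $M$) and Lemma \ref{lem:dualProperties} (each optimal $\bar Y_i$ is rank one with $\mathcal{R}(\bar Y_i)=\mathcal{N}(\bar M-C_i)$) --- then proves the \emph{dual} solution is unique and recovers the primal from $\bar M=\Omega^{-1}\sum_i \bar Y_i C_i$. You instead take two optimal $M_1\neq M_2$, use the fact that the optimal segment stays feasible to force $\ker(C_i-M^\star)\subseteq\ker\Delta$ at the midpoint (your two-sided-perturbation PSD lemma is correct and does need the block/Schur argument you sketch, since the quadratic form vanishing on $\ker N$ alone is not enough), push this through complementary slackness to get $\operatorname{range}(\Lambda_i^\star)\subseteq\ker\Delta$, and sum to conclude $\mathbb{R}^{k+1}=\operatorname{range}(\Omega)\subseteq\ker\Delta$, so $\Delta=0$. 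This is shorter and more elementary: it needs only strict feasibility of both problems and $\Omega\succ 0$, not the rank-one structure of the multipliers. What the paper's heavier route buys is reuse --- Lemmas \ref{lem:PositiveDefinite} and \ref{lem:dualProperties} and the uniqueness of the dual solution are exactly the ingredients Appendix \ref{app:Mdot} needs to assemble the linear system for $\dot{\bar M}$ in Theorem \ref{thm:dotM}, so your argument proves Theorem \ref{thm:gradientTheorem} more economically but would not by itself replace those lemmas elsewhere in the paper.
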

\begin{proof}
	See Appendix \ref{app:DerivativeCalculations}.
\end{proof}
The preceding result shows that $\partial p(\Omega)/ \partial \Omega$ is produced as a byproduct of evaluation of $\inf_{\mathbb{P} \in \mathcal{P}} \mathbb{E}_{\mathbb{P}}[g(\xi)]$, since it is simply $\bar M(\Omega)$, the unique optimizer of \eqref{eqn:PrimalOptProb}. The simplicity of this result suggests consideration of second derivative information of $p(\Omega)$, i.e.\ derivatives of $\bar M(\Omega)$. The following result proves that this is well defined and tractable for any $\Omega \succ 0$:
\begin{theorem}\label{thm:dotM}
	The optimal solution function $\bar M: \mathbb{S}_{++}^{k+1} \mapsto \mathbb{S}^{k+1}$ in problem \eqref{eqn:PrimalOptProb} is differentiable on $\mathbb{S}_{++}^{k+1}$. Every directional derivative of $\bar M (\Omega)$ is the unique solution to a sparse linear system with $\mathcal{O}(k^3)$ nonzeros.
\end{theorem}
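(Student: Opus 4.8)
The plan is to characterise $\bar M(\Omega)$ as (part of) the solution of the Karush--Kuhn--Tucker (KKT) system of the primal--dual pair attached to \eqref{eqn:PrimalOptProb}, to verify the regularity conditions under which that system can be solved smoothly for its data $\Omega$, and then to obtain the directional derivatives of $\bar M$ by differentiating the KKT system, reading off the sparsity at the end. First I would write the dual of \eqref{eqn:PrimalOptProb}: introducing multipliers $\Lambda_0,\dots,\Lambda_k \in \mathbb{S}^{k+1}_{+}$ for the constraints $C_i - M \succeq 0$, the dual is
\[
	\inf\Bigl\{\, \textstyle\sum_{i=0}^k \langle C_i, \Lambda_i\rangle \;:\; \sum_{i=0}^k \Lambda_i = \Omega,\ \ \Lambda_i \succeq 0 \,\Bigr\}.
\]
Since $\Omega \succ 0$, both problems are strictly feasible (take $M = -\varepsilon I$ in \eqref{eqn:PrimalOptProb}; write $\Omega$ as a strictly positive combination in the dual), so strong duality holds and both optima are attained; the primal optimiser $\bar M(\Omega)$ is unique by Theorem~\ref{thm:gradientTheorem}. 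Writing $S_i \eqdef C_i - \bar M(\Omega)$, the optimality conditions are $\sum_i \Lambda_i = \Omega$, $\ \Lambda_i S_i = 0$, $\ S_i \succeq 0$, $\ \Lambda_i \succeq 0$.

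The crux is to establish, for every $\Omega \succ 0$, that the dual optimiser $\{\bar\Lambda_i(\Omega)\}$ is unique and that strict complementarity holds, i.e.\ $\bar\Lambda_i + S_i \succ 0$ for each $i$. This is where the structure of the program must genuinely be used: that $C_0 = 0$, that each $C_i$ for $i \ge 1$ is the rank-two ``arrow'' matrix \eqref{eqn:PrimalConstraints}, and that $\Omega \succ 0$. Together with the primal uniqueness already supplied by Theorem~\ref{thm:gradientTheorem}, these give primal and dual nondegeneracy together with strict complementarity, which by the classical sensitivity theory for semidefinite programs (strong regularity of the KKT map; cf.\ the analyses of Bonnans and Shapiro, and of Alizadeh, Haeberly and Overton) imply that the solution of the KKT system is a $C^1$ function of $\Omega$; in particular $\bar M$ is differentiable on all of $\mathbb{S}^{k+1}_{++}$. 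This is consistent with $p$ being convex and $\bar M = \nabla p$ by Theorem~\ref{thm:gradientTheorem}, so what is being asserted is that $p$ is twice differentiable.

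It then remains to exhibit the linear system. Because the $C_i$ do not depend on $\Omega$, a perturbation $\Omega \mapsto \Omega + t\,\delta\Omega$ gives $\delta S_i = -\delta M$, and differentiating the KKT system yields $\sum_i \delta\Lambda_i = \delta\Omega$ together with the linearised complementarity relations $\delta\Lambda_i\, S_i + \Lambda_i\,\delta S_i = 0$ restricted to the block subspaces fixed by strict complementarity; $D\bar M(\Omega)[\delta\Omega]$ is the $\delta M$-component, and strong regularity makes it the unique solution. For the nonzero count, the optimal multipliers are low rank --- their ranks sum to $k+1$, reflecting the $k+1$ atoms of the minimising discrete distribution (the remark following Theorem~\ref{thm:ValueTheorem}) --- so, passing to the complementarity-induced coordinates, each $\delta\Lambda_i$ is parametrised by $\mathcal{O}(r_i k)$ scalars with $\sum_i r_i = k+1$, giving $\mathcal{O}(k^2)$ unknowns in total together with $\delta M$; the only couplings are the single summation constraint and the block pattern ``$\delta M$ against each $\delta\Lambda_i$'' coming from the complementarity blocks, so each of the $\mathcal{O}(k^2)$ equations involves $\mathcal{O}(k)$ unknowns, for $\mathcal{O}(k^3)$ nonzeros overall.

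The main obstacle is the regularity step: strict complementarity and dual uniqueness can fail for semidefinite programs in general, so the argument cannot invoke a generic-position assumption and must instead exploit the special form of the matrices $C_i$ (and the choice $C_0 = 0$) together with $\Omega \succ 0$. Once that is in place, the differentiation and the $\mathcal{O}(k^3)$ bookkeeping are routine given Theorems~\ref{thm:ValueTheorem} and~\ref{thm:gradientTheorem}.
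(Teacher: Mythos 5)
Your architecture matches the paper's---establish uniqueness/regularity of the primal--dual pair, differentiate the KKT system, and count nonzeros---but there is a genuine gap at exactly the point you flag as ``the main obstacle'': you never actually prove dual uniqueness or strict complementarity for this SDP, and without them the appeal to strong regularity of the KKT map (Bonnans--Shapiro, Alizadeh--Haeberly--Overton) is unjustified. The paper does this work explicitly. Lemma~\ref{lem:PositiveDefinite} shows via Schur complements that every feasible $M$ has $M_{11}\prec 0$, so $[x^T\,0](\bar M - C_i)[x^T\,0]^T<0$ for all $x\neq 0$ and hence $\rank(\bar M - C_i)\ge k$; complementary slackness then forces $\rank(\bar Y_i)\le 1$, and since $\sum_i\rank(\bar Y_i)\ge\rank(\Omega)=k+1$ one gets $\rank(\bar Y_i)=1$ and $\rank(\bar M-C_i)=k$ exactly (Lemma~\ref{lem:dualProperties}), which is precisely strict complementarity. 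Dual uniqueness is then derived in part (ii) of the proof of Lemma~\ref{lem:appSupportLemma1} by showing that the rank-one factor of any alternative dual optimizer must be a null vector of $\bar M - C_i$, hence a scalar multiple of $\bar y_i$, with the scalars pinned down by linear independence coming from $\Omega\succ 0$. These are the facts your proof defers: your statement that the multiplier ranks ``sum to $k+1$'' is taken from the informal remark after Theorem~\ref{thm:ValueTheorem} rather than proved, and it is the load-bearing step.

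Two smaller points. First, the paper does not route differentiability of $\bar M$ through strong-regularity sensitivity theory; it cites Freund for existence of the directional derivatives as the unique solution of the linearized KKT system (given primal and dual uniqueness), and then upgrades to differentiability via outer semicontinuity plus uniqueness (hence continuity) plus linearity of the directional derivatives in the direction, invoking Rockafellar--Wets. Your route would also work once nondegeneracy is actually in hand. Second, your raw differentiated system $\delta\Lambda_i S_i - \Lambda_i\,\delta M=0$, $\sum_i\delta\Lambda_i=\delta\Omega$ is the paper's \eqref{eqn:derivativesFreund}, which the paper explicitly notes is overdetermined and has $\mathcal{O}(k^4)$ nonzeros in standard matrix form; the advertised $\mathcal{O}(k^3)$ system is obtained only after substituting the rank-one factorization $\bar Y_i=\bar y_i\bar y_i^T$ and differentiating the factor itself (which requires a separate argument, via Kato's perturbation theory, that the factor can be chosen to vary smoothly). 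Your ``complementarity-induced coordinates'' gestures at the same reduction, but the bookkeeping that delivers a square, nonsingular system with the claimed sparsity is not carried out.
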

\begin{proof}
	See Appendix \ref{app:Mdot}.
\end{proof}
We can now consider the differentiability of $\bar \alpha = p\circ\Omega$. The following Corollary establishes this under certain conditions.
\begin{corollary}
	$\bar \alpha: \mathbb{R}^{k \times n} \mapsto \mathbb{R}$ is twice differentiable on any $X$ for which $\Sigma(X) \succ 0$ and the mean and kernel functions of the underlying GP are twice differentiable.
\end{corollary}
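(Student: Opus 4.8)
The plan is to write $\bar\alpha = p\circ\Omega$ and combine the regularity of the two factors via the chain rule. Theorems~\ref{thm:gradientTheorem} and~\ref{thm:dotM} together already show that $p$ is twice differentiable on $\mathbb{S}^{k+1}_{++}$: the first identifies $\partial p(\Omega)/\partial\Omega$ with $\bar M(\Omega)$, and the second shows that $\bar M$ is itself differentiable on $\mathbb{S}^{k+1}_{++}$, so $p$ has a differentiable gradient there. It therefore suffices to prove that, on the set $U \eqdef \{\,X \in \mathbb{R}^{k\times n} : \Sigma(X) \succ 0\,\}$, the map $X \mapsto \Omega(X)$ is twice differentiable and takes values in $\mathbb{S}^{k+1}_{++}$; since a composition of twice-differentiable maps is twice differentiable, the conclusion follows.

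For the differentiability of $\Omega$, recall the standard GP regression equations underlying \eqref{eqn:GPPrediction}, which express
\[
	\mu(X) = m(X) + \kappa(X,X^d)\,\kappa(X^d,X^d)^{-1}\bigl(y^d - m(X^d)\bigr),
\]
\[
	\Sigma(X) = \kappa(X,X) - \kappa(X,X^d)\,\kappa(X^d,X^d)^{-1}\kappa(X^d,X),
\]
in terms of the fixed invertible Gram matrix $\kappa(X^d,X^d)$ and of the mean and kernel functions evaluated at the test inputs $X$. Under the stated hypothesis $m$ and $\kappa$ are twice differentiable in their arguments, and matrix sums and products preserve twice differentiability, so $\mu$ and $\Sigma$ are twice differentiable on all of $\mathbb{R}^{k\times n}$. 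The entries of $\Omega(X)$ in \eqref{eqn:Omega} are sums and products of the entries of $\mu(X)$ and $\Sigma(X)$ (namely the block $\Sigma + \mu\mu^T$, the vector $\mu$, and the constant $1$), hence $\Omega$ is twice differentiable as well.

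It remains to verify $\Omega(X)\succ 0$ on $U$, which is a Schur-complement argument: the $(2,2)$ block of $\Omega(X)$ is the scalar $1>0$, and the corresponding Schur complement of the leading block is $\Sigma(X) + \mu(X)\mu(X)^T - \mu(X)\cdot 1\cdot \mu(X)^T = \Sigma(X)$, which is positive definite exactly when $X\in U$; thus $\Omega(X)\in\mathbb{S}^{k+1}_{++}$ for all $X\in U$. Moreover $U$ is open, since $\Sigma$ is continuous and $\mathbb{S}^{k}_{++}$ is open. Composing the twice-differentiable map $\Omega\colon U \to \mathbb{S}^{k+1}_{++}$ with the twice-differentiable map $p\colon \mathbb{S}^{k+1}_{++}\to\mathbb{R}$ gives that $\bar\alpha = p\circ\Omega$ is twice differentiable on $U$. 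The only genuinely substantive ingredient here is the twice differentiability of $p$, which is precisely what Theorems~\ref{thm:gradientTheorem} and~\ref{thm:dotM} supply; the only mild subtlety is ensuring that the image of $\Omega$ lands in the \emph{open} cone $\mathbb{S}^{k+1}_{++}$, rather than merely in $\mathbb{S}^{k+1}_{+}$, which the Schur-complement computation settles cleanly.
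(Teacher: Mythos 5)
Your proposal is correct and follows essentially the same route as the paper: twice differentiability of $p$ on $\mathbb{S}^{k+1}_{++}$ from Theorems~\ref{thm:gradientTheorem} and~\ref{thm:dotM}, twice differentiability of $X\mapsto\Omega(X)$ from the GP regression equations and \eqref{eqn:Omega}, and the equivalence $\Omega(X)\succ 0 \Leftrightarrow \Sigma(X)\succ 0$ (which the paper leaves as ``examining \eqref{eqn:Omega}'' and you make explicit via the Schur complement). The added detail on the openness of the domain and the explicit regression formulas is a faithful elaboration rather than a different argument.
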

\begin{proof}
	By examining the GP Regression equations \citep{Rasmussen2005} and Equation \eqref{eqn:Omega}, we conclude that $\Omega(X)$ is twice differentiable on $\mathbb{R}^{k\times n}$ if the kernel and mean functions of the underlying Gaussian Process are twice differentiable. Hence, $\bar \alpha (X) = p(\Omega(X))$ is twice differentiable for any $\Omega(X) \succ 0$ as a composition of twice differentiable functions. Examining \eqref{eqn:Omega} reveals that $\Omega(X) \succ 0$ is equivalent to $\Sigma(X) \succ 0$, which concludes the proof.
\end{proof}
A rank deficient $\Sigma(X) \nsucc 0$ implies perfectly correlated outcomes. At these points both OEI and QEI are, in general, non-differentiable as shown in Proposition \ref{prop:nonDifferentiability}. This contradicts \cite{Marmin2015} which claims differentiability of QEI in the general setting. However, this is not constraining in practice as both QEI and OEI can be calculated by considering a smaller, equivalent problem. It is also not an issue for descent based methods for minimizing $\bar \alpha$, as a descent direction can be obtained by an appropriate perturbation of the perfectly correlated points.

We are now in a position to derive expressions for the gradient and the hessian of $\bar \alpha = p\circ \Omega$. For simplicity of notation we consider derivatives over $\bar x = \text{vec}(X)$. Application of the chain rule to $\bar \alpha(\bar x) = p(\Omega(\bar x))$ gives:
\begin{equation}
	\frac{\partial \bar \alpha (\bar x)}{\partial \bar x_{(i)}}
	=
	\innerprod{
		\frac{\partial p(\Omega)}
		{\partial \Omega}
		}{
		\frac{\partial \Omega(\bar x )}
		{\partial \bar x_{(i)}}
		} =
	\innerprod{
		\bar M(\Omega)
		}{
		\frac{\partial \Omega(\bar x)}
		{\partial \bar x_{(i)}}
	}
	\label{eqn:gradient}
\end{equation}
Note that the second term in the rightmost inner product above depends on the particular choice of covariance function $\kappa$ and mean function $m$. It is straightforward to compute \eqref{eqn:gradient} in modern graph-based \texttt{autodiff} frameworks, such as the \texttt{TensorFlow}-based \texttt{GPflow}. Differentiating again \eqref{eqn:gradient} gives the hessian of $\bar \alpha$:
\begin{equation}
	\frac{\partial^2 \bar \alpha(\bar x)}{\partial \bar x_{(i)} \partial \bar x_{(j)}}
	=
	\frac{\partial}{\partial x_{(i)}}
	\innerprod{
		\bar M(\Omega)
		}{
		\frac{\partial \Omega(\bar x)}
		{\partial \bar x_{(j)}}
	}
	=
	\innerprod{
		\bar M(\Omega)
	}
	{
		\frac{\partial^2
			\Omega(\bar x)}
		{\partial \bar x_{(i)} \partial \bar x_{(j)}}
	}
	+
	\innerprod{
		\frac{\partial \bar M(\Omega(\bar x))}{\partial \bar x_{(i)}}
		}{
		\frac{\partial \Omega(\bar x)}{\partial \bar x_{(j)}}
	}
	\label{eqn:ChainRule}
\end{equation}
where $\partial \bar M / \partial \bar x_{(i)}$ is the directional derivative of $\bar M(\Omega)$ across the perturbation $\partial \Omega(\bar x ) / \partial \bar x_{(i)}$. According to Theorem \ref{thm:dotM}, each of these directional derivatives exists and can be computed via solution of a sparse linear system.

\section{Empirical analysis} \label{sec:empirical_analysis}
\begin{table}[ht]
	\caption{List of acquisition functions} \label{sample-table}
	\begin{center}
		\begin{tabular}{lc}
			\toprule
			{Key} & {Description} \\
			\hline \\
			OEI  & 
			Optimistic Expected Improvement
			\emph{(Our novel algorithm)} \\
		QEI &
			Multi-point Expected Improvement
			\citep{Marmin2015} \\
		QEI-CL &
			Constant Liar (\emph{``mix''} strategy)
			\citep{Ginsbourger2010} \\
		LP-EI &
			Local Penalization Expected Improvement
			\citep{Gonzalez2016} \\
		BLCB &
			Batch Lower Confidence Bound
			\citep{Desautels2014} \\
		\end{tabular}
	\end{center}
	\label{table:AlgList}
\end{table}
% Larger version of table
% \begin{table}[ht]
% 	\caption{List of acquisition functions} \label{sample-table}
% 	\begin{center}
% 		\begin{tabular}{lc}
% 			\toprule
% 			{Key} & {Description} \\
% 			\hline \\
% 			OEI   &
% 			\begin{tabular}{@{}c@{}}
% 			Optimistic Expected Improvement\\
% 			\emph{(Our novel algorithm)}
% 		\end{tabular} \\[0.5cm]
% 		QEI &
% 		\begin{tabular}{@{}c@{}}
% 			Multi-point Expected Improvement \\
% 			\citep{Chevalier2013}            \\
% 			\citep{Marmin2015, Roustant2012}
% 		\end{tabular}\\[0.8cm]
% 		QEI-CL &
% 		\begin{tabular}{@{}c@{}}
% 			Constant Liar (\emph{``mix''} strategy) \\
% 			\citep{Ginsbourger2010}
% 		\end{tabular} \\[0.5cm]
% 		LP-EI &
% 		\begin{tabular}{@{}c@{}}
% 			Local Penalization Expected Improvement \\
% 			\citep{Gonzalez2016, GPyOpt2016}
% 		\end{tabular} \\[0.5cm]
% 		BLCB&
% 		\begin{tabular}{@{}c@{}}
% 			Batch Lower Confidence Bound \\
% 			\citep{Desautels2014}
% 		\end{tabular}
% 		\end{tabular}
% 	\end{center}
% 	\label{table:AlgList}
% \end{table}
%
In this section we demonstrate the effectiveness of our acquisition function against a number of state-of-the-art alternatives. The acquisition functions we consider are listed in Table \ref{table:AlgList}. We do not compare against PPES as it is substantially more expensive and elaborate than our approach and there is no publicly available implementation of this method.

We show that our acquisition function OEI achieves better performance than alternatives and highlight simple failure cases exhibited by competing methods. In making the following comparisons, extra care should be taken in the setup used. This is because Bayesian Optimization is a multifaceted procedure that depends on a collection of disparate elements (e.g. kernel/mean function choice, normalization of data, acquisition function, optimization of the acquisition function) each of which can have a considerable effect on the resulting performance \citep{Snoek2012,Shahriari2016}. For this reason we test the different algorithms on a unified testing framework, based on \texttt{GPflow}, available online at \href{https://github.com/oxfordcontrol/Bayesian-Optimization}{https://github.com/oxfordcontrol/Bayesian-Optimization}.

Our acquisition function is evaluated via solution of a semidefinite program, and as such it benefits from the huge advances of the convex optimization field. A variety of standard tools exist for solving such problems, including \texttt{MOSEK} \citep{MosekPy}, \texttt{SCS} \citep{ODonoghue2016} and \texttt{CDCS} \citep{Zheng2017}. We chose the first-order \citep{Boyd2004}, freely-available solver \texttt{SCS}, which scales well with batch size and allows for solver warm-starting between acquisition function evaluations.

Warm starting allows for a significant speedup since the acquisition function is evaluated repeatedly at nearby points by the non-linear solver. This results in solving \eqref{eqn:PrimalOptProb} repeatedly for similar $\Omega$. Warm-starting the SDP solver with the previous solution reduces \texttt{SCS}'s iterations by $77\%$ when performing the experiments of Figure $\ref{fig:SyntheticBatch20}$. Moreover, Theorem \ref{thm:gradientTheorem} provides the means for a first-order warm starting. Indeed, the derivative of the solution across the change of the cost matrix $\Omega$ can be calculated, allowing us to take a step in the direction of the gradient and warm start from that point. This reduces \texttt{SCS}'s iterations by a further $43\%$.

Indicative timing results for the calculation of OEI, QEI and their derivatives are listed in Table \ref{table:Timings}. The dominant operation for calculating OEI and its gradient is solving \eqref{eqn:PrimalOptProb}. 
This makes OEI much faster than QEI, which is in line with the complexity of the dominant operation in SDP solvers based on first-order operator splitting methods such as \texttt{SCS} or \texttt{CDCS} which, for our problem,  is $\mathcal{O}\bigl(k^4 \bigr)$. Assume that, given the solution of \eqref{eqn:PrimalOptProb}, we want to also calculate the hessian of OEI. This would entail the following two operations:
\begin{description}
	\item[Calculating $\partial \bar M / \partial X_{(i, j)}$ given $\partial \Omega(X) / \partial X_{(i, j)}.$] According to Lemma \ref{thm:dotM} this can be obtained as a solution to a sparse linear system. We used \texttt{Intel$^{\tiny{\textregistered}}$ MKL PARDISO} to solve efficiently these linear systems.
	\item[Calculate $\partial \Omega(X) / \partial X_{(i, j)}$ and apply chain rules] of \eqref{eqn:ChainRule} to get the hessian of the acquisition function $\bar \alpha = p \circ \Omega$ given the gradient and hessian of $p$. We used \texttt{Tensorflow}'s Automatic Differentiation for this part, without any effort to optimize its performance. Considerable speedups can be brought by e.g. running this part on GPU, or automatically generating low-level code optimized specifically for this operation \citep{Vasilache2018}.
\end{description}
Note that the computational tractability of the hessian is only allowed due to the novelty of Theorem \ref{thm:gradientTheorem} which exploits the ``structure'' of \eqref{eqn:PrimalOptProb}'s optimizer.
\begin{table}[t]
	\caption{Average execution time of the acquisition function, its gradient and hessian when running BO in the Eggholder function on an Intel$^{\tiny{\textregistered}}$ E5-2640v3 CPU. For batch size 40, QEI fails, i.e. it always returns 0 without any warning message. For the execution time of the hessian we assume knowledge of the solution of \eqref{eqn:PrimalOptProb}. Its timing is split into two parts as described in the main text. }
	\vskip 0.15in
	\begin{center}
		\begin{tabular}{cc|c|cc}
			\toprule
			Batch Size & QEI: $\alpha(X), \nabla \alpha(X)$ & OEI: $\bar \alpha(X), \nabla \bar \alpha(X)$ &  \multicolumn{2}{c}{$\nabla^2 \bar \alpha(X)$} \\
			\midrule
			& & Solve \eqref{eqn:PrimalOptProb} & $\partial \bar M$ & \texttt{Tensorflow} Part\\
			2 & $5.6 \cdot 10^{-3}$ & $2.1 \cdot 10^{-4}$ & $5.3 \cdot 10^{-4}$ & $1.0 \cdot 10^{-2}$ \\
			3 & $1.2 \cdot 10^{-2}$ & $3.8 \cdot 10^{-4}$ & $7.5 \cdot 10^{-4}$ & $1.4 \cdot 10^{-2}$ \\ 
			6 & $1.1 \cdot 10^{-1}$ & $1.5 \cdot 10^{-3}$ & $1.7 \cdot 10^{-3}$ & $2.0 \cdot 10^{-2}$ \\
			10 & $1.1$  & $8.2 \cdot 10^{-3}$ & $5.9 \cdot 10^{-3}$ & $3.2 \cdot 10^{-2}$\\
			20 & $2.1 \cdot 10^{1}$ & $4.7 \cdot 10^{-2}$ & $2.3 \cdot 10^{-2}$ & $8.7 \cdot 10^{-2}$ \\
			40 & $-$ & $3.4 \cdot 10^{-1}$ & $1.4 \cdot 10^{-1}$ & $3.5 \cdot 10^{-1}$ \\
			\bottomrule
		\end{tabular}
	\end{center}
	\vskip -0.1in
	\label{table:Timings}
\end{table}

We chose the \texttt{KNITRO v10.3} \citep{Byrd2006} Sequential Quadratic Optimization (SQP) non-linear solver with the default parameters for the optimization of OEI. Explicitly providing the hessian on the experiments of Figure \ref{fig:SyntheticBatch20} reduces \texttt{KNITRO}'s iterations by $49\%$ as compared to estimating the hessian via the symmetric-rank-one (SR1) update method included in the \texttt{KNITRO} suite. Given the inexpensiveness of calculating the hessian and the fact that \texttt{KNITRO} requests the calculation of the hessian less than a third as often as it requires the objective function evaluation we conclude that including the hessian is beneficiary.

We will now present simulation results to demonstrate the performance of OEI in various scenarios.

\subsection{Perfect modeling assumptions}
We first demonstrate that the competing Expected Improvement based algorithms produce clearly suboptimal choices in a simple \mbox{1--dimensional} example. We consider a 1--d Gaussian Process on the interval $[-1, 1]$ with a squared exponential kernel \citep{Rasmussen2005} of lengthscale $1/10$, variance $10$, noise $10^{-6}$ and a mean function $m(x) = (5x)^2$. An example posterior of 10 observations is depicted in Figure \ref{fig:GP_draw}. Given the GP and the 10 observations, we depict the optimal 3-point batch chosen by minimizing each acquisition function. Note that in this case we assume perfect modeling assumptions -- the GP is completely known and representative of the actual process. We can observe in Figure~\ref{fig:GP_draw} that the OEI choice is very close to the one proposed by QEI while being slightly more explorative, as OEI allows for the possibility of more exotic distributions than the Gaussian.
\begin{figure}[t]
	\centering
	\begin{subfigure}[t]{.45\textwidth}
		\includegraphics[width=1\textwidth]{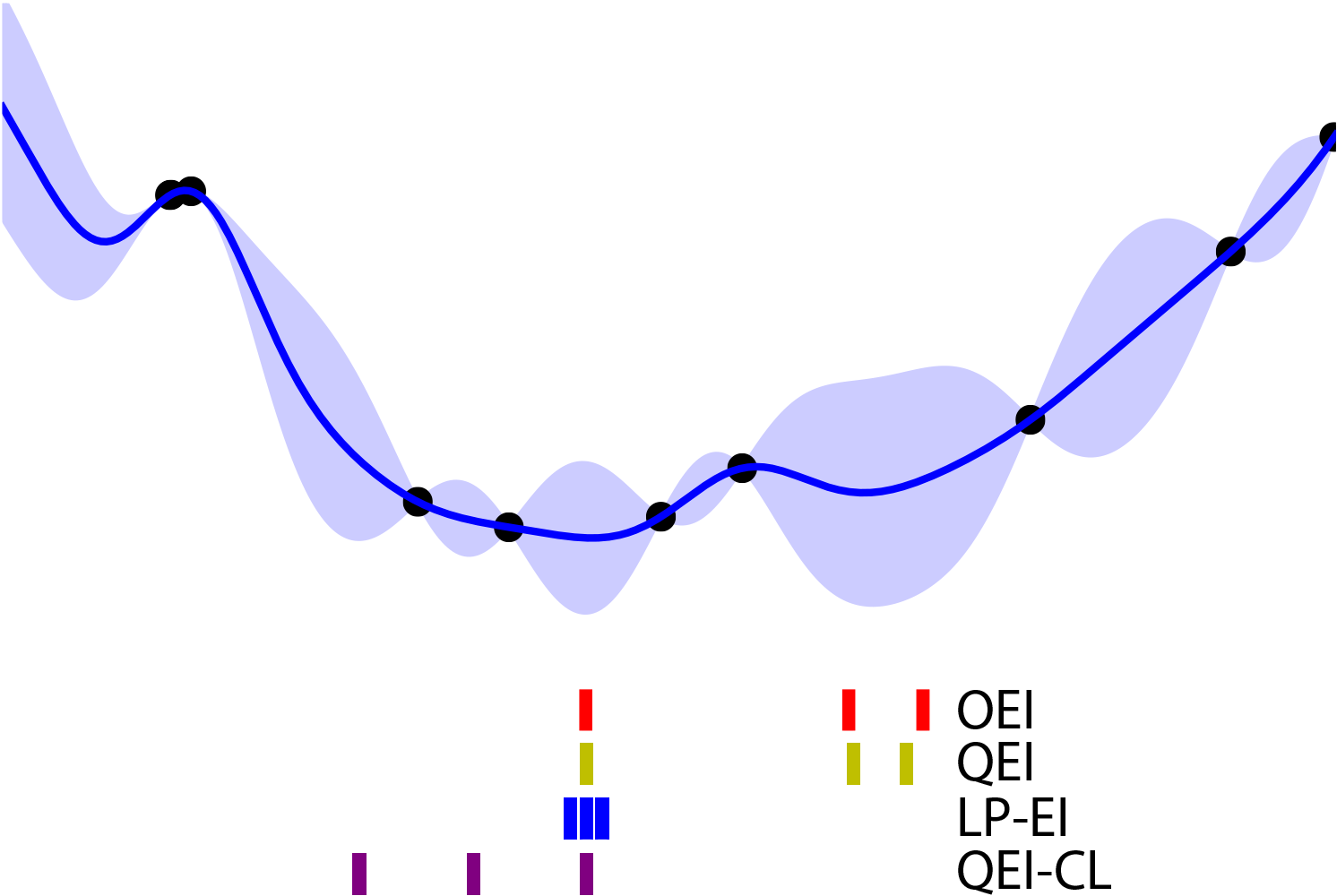}
		\caption{Suggested 3-point batches of different algorithms for a GP posterior given 10 observations. The thick blue line depicts the GP mean, the light blue shade the uncertainty intervals ($\pm$ sd) and the black dots the observations. The locations of the batch chosen by each algorithm are depicted with colored vertical lines  at the bottom of the figure.}
		\label{fig:GP_draw}
	\end{subfigure}
	\quad
	\begin{subfigure}[t]{.45\textwidth}
		\centering
		\includegraphics[width=.9\textwidth]{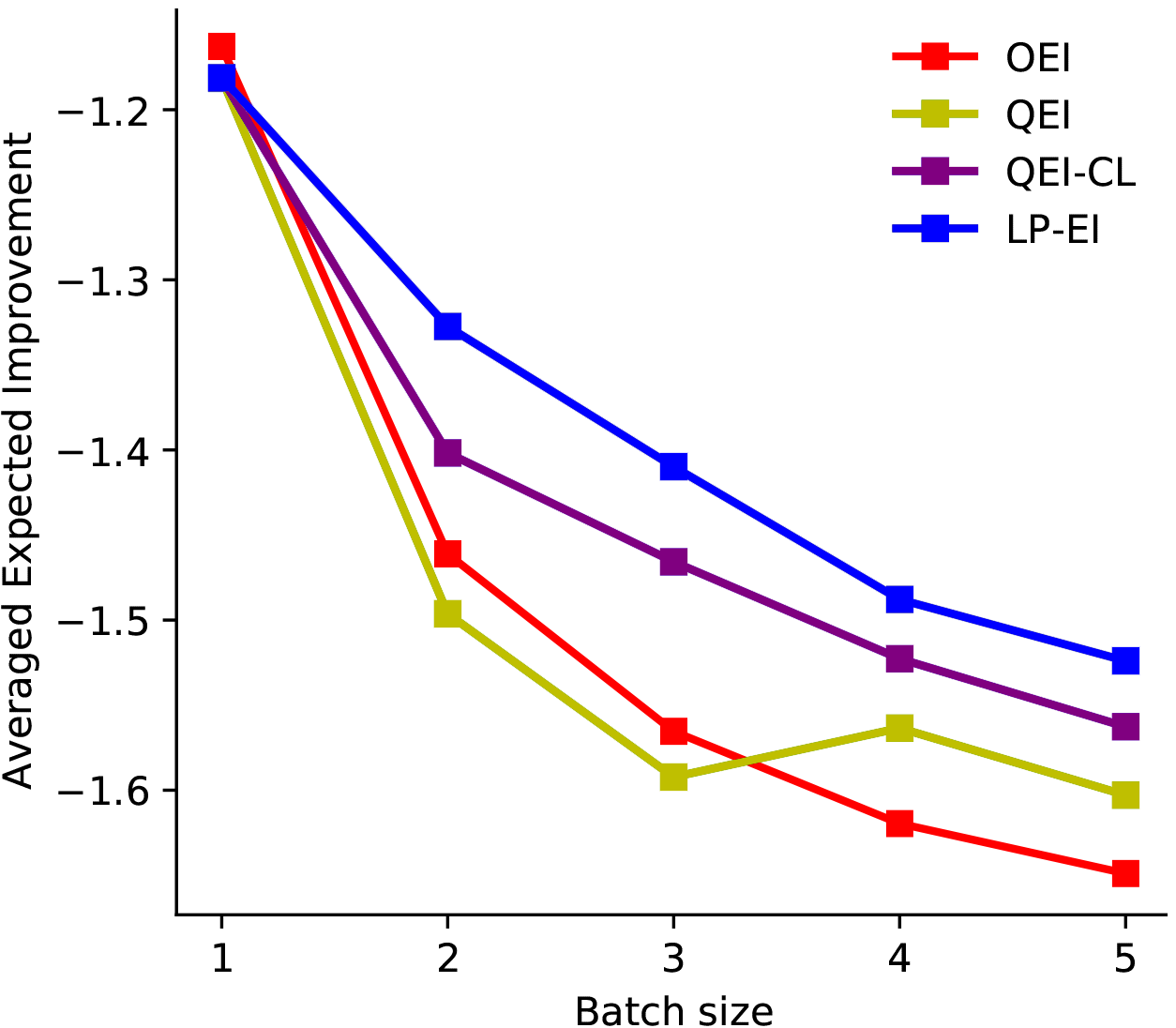}
		\caption{Averaged one-step Expected Improvement on 200 GP posteriors of sets of 10 observations with the same generative characteristics (kernel, mean, noise) as the one in Figure \ref{fig:GP_draw} for different algorithms across varying batch size.}
		\label{fig:EI_vs_batch_size}
	\end{subfigure}
\end{figure}

In contrast the LP-EI heuristic samples almost at the same point all three times. This can be explained as follows: LP-EI is based on a Lipschitz criterion to penalize areas around previous choices. However, the Lipschitz constant for this function is dominated by the end points of the function (due to the quadratic trend), which is clearly not suitable for the area of the minimizer (around zero), where the Lipschitz constant is approximately zero. On the other hand, QEI-CL favors suboptimal regions. This is because QEI-CL postulates outputs equal to the mean value of the observations which significantly alter the GP posterior.

Testing the algorithms on 200 different posteriors, generated by creating sets of 10 observations drawn from the previously defined GP, suggests OEI as the clear winner.  For each of the 200 posteriors, each algorithm chooses a batch, the performance of which is evaluated by sampling the multipoint expected improvement \eqref{eqn:ExpectedImprovement}. The averaged results are depicted in Figure \ref{fig:EI_vs_batch_size}. For a batch size of 1 all of the algorithms perform the same, except for OEI which performs slightly worse due to the relaxation of Gaussianity. For batch sizes 2-3, QEI is the best strategy, while OEI is a very close second. For batch sizes 4-5 OEI performs significantly better. The deterioration of the performance for QEI in batch sizes 4 and 5 can be explained in Figure \ref{fig:qei_problem}. In particular, after batch size 3, the calculation of QEI via the \texttt{R} package \texttt{DiceOptim} exhibits some points of failure, leading to suboptimal choices. Figure \ref{fig:qei_problem} also explains the very good performance of OEI. Although always different from the sampled estimate, it is \emph{reliable} and closely approximates the actual expected improvement in the sense that their optimizers and level sets are in close agreement.
\begin{figure}[t]
	\centering
	\includegraphics[width=.5\textwidth]{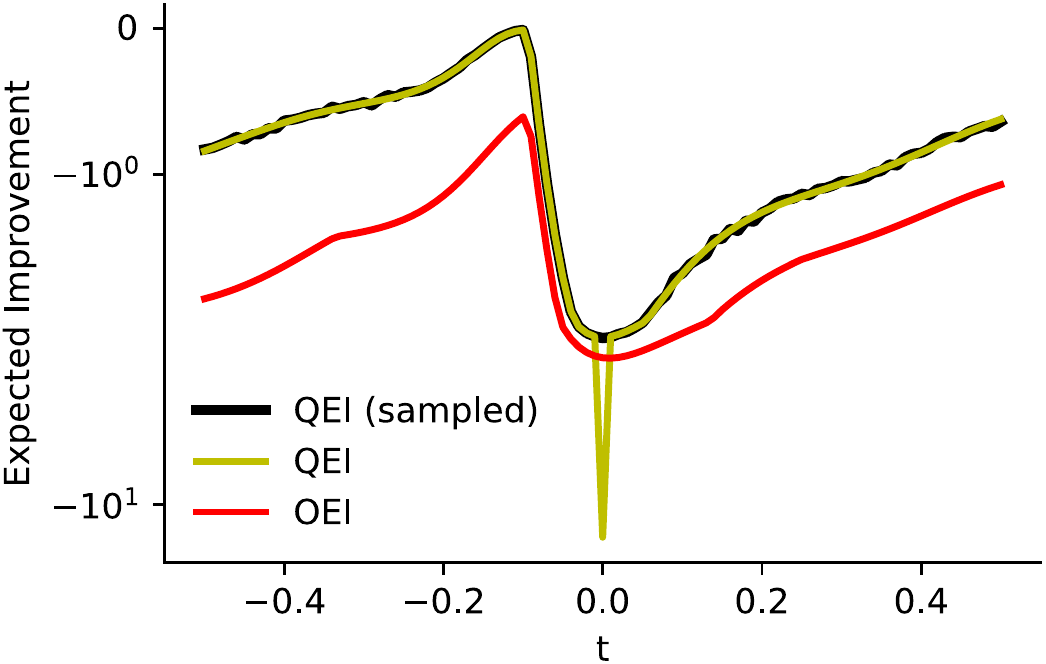}
	\caption{Demonstration of a case of failure in the calculation of the Multi-Point Expected Improvement with the \texttt{R} package \texttt{\tt DiceOptim v2} \citep{Roustant2012}. The figure depicts how the expected improvement varies around a starting batch of 5 points, $X_0$, when perturbing the selection across a specific direction $X_t$ i.e. $f(t) = \mathrm{EI}(X_0 + tX_t)$. The \texttt{\tt DiceOptim}'s calculation is almost always very close to the sampled estimate, except for a single breaking point. In contrast our acquisition function OEI, although always different from the sampled estimate, is \emph{reliable} and closely approximates QEI in the sense that their optimizers and level sets nearly coincide.}
	\label{fig:qei_problem}
\end{figure}

\subsection{Synthetic functions}
\begin{figure}[!ht]
	\centering
	\begin{subfigure}{.49\textwidth}
		\centering
		\includegraphics[width=1\textwidth]{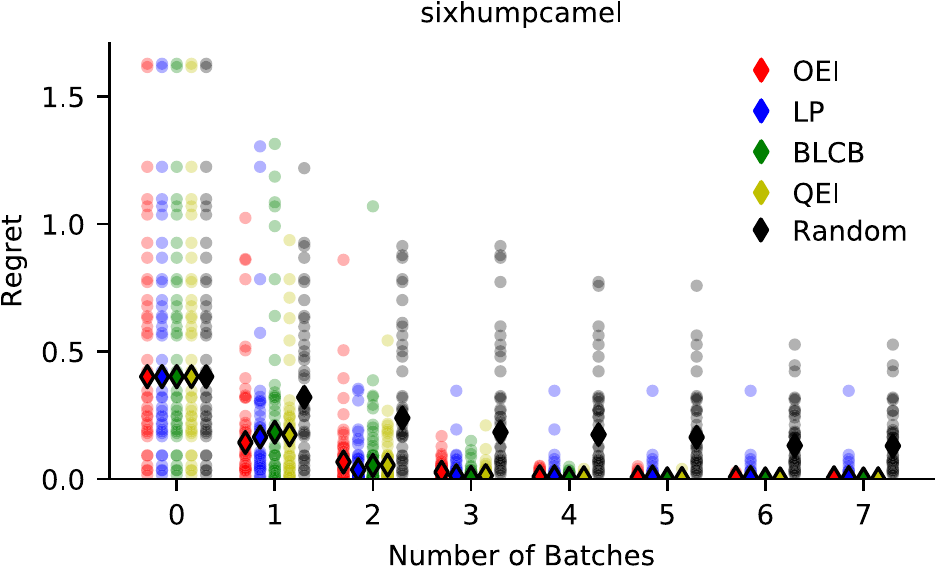}
	\end{subfigure}
	\begin{subfigure}{.49\textwidth}
		\centering
		\includegraphics[width=1\textwidth]{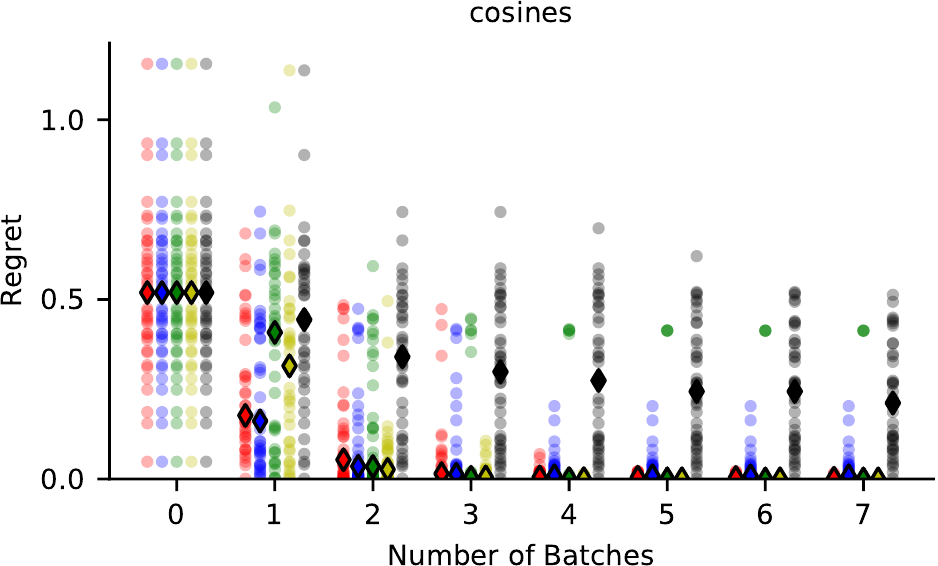}
	\end{subfigure}
	\begin{subfigure}{.49\textwidth}
		\centering
		\includegraphics[width=1\textwidth]{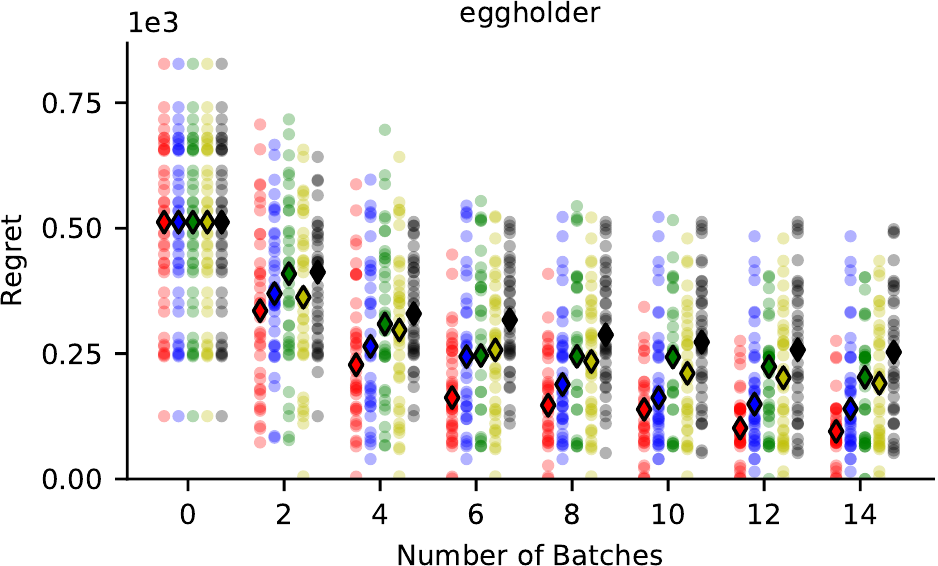}
	\end{subfigure}
	\begin{subfigure}{.49\textwidth}
		\centering
		\includegraphics[width=1\textwidth]{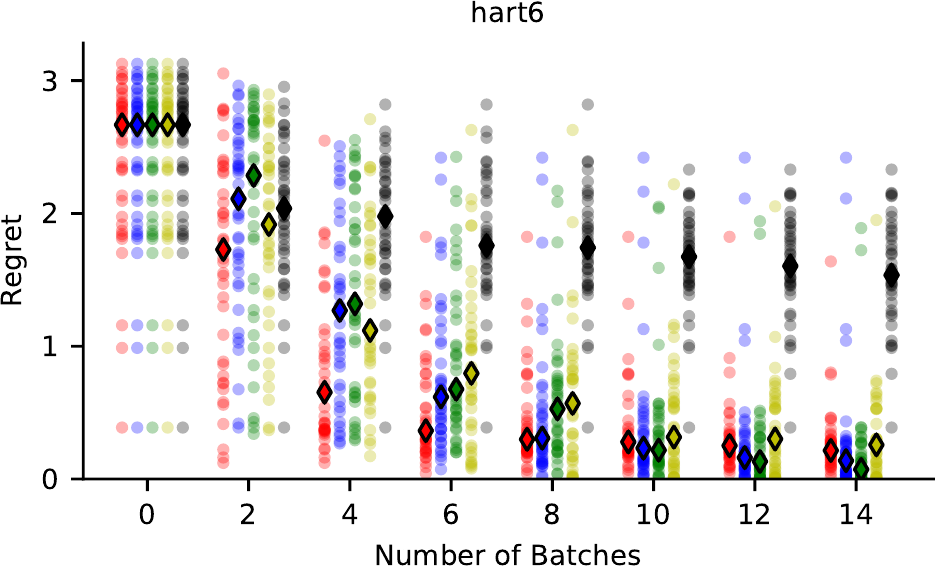}
	\end{subfigure}
	\caption{BO of \emph{batch size 5} on synthetic functions. Red, blue, green, yellow and black dots depict runs of OEI, LP-EI, BLCB, QEI and Random algorithms respectively. Diamonds depict the median regret for each algorithm.}
	\label{fig:SyntheticBatch5}
\end{figure}
\begin{figure}[!ht]
	\centering
	\begin{subfigure}{.49\textwidth}
		\centering
		\includegraphics[width=1\textwidth]{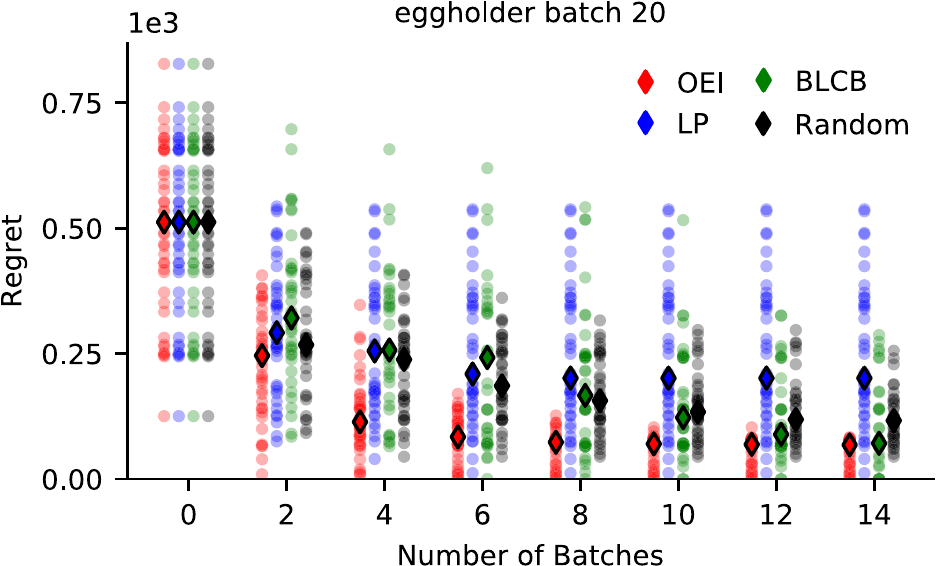}
	\end{subfigure}
	\begin{subfigure}{.49\textwidth}
		\centering
		\includegraphics[width=1\textwidth]{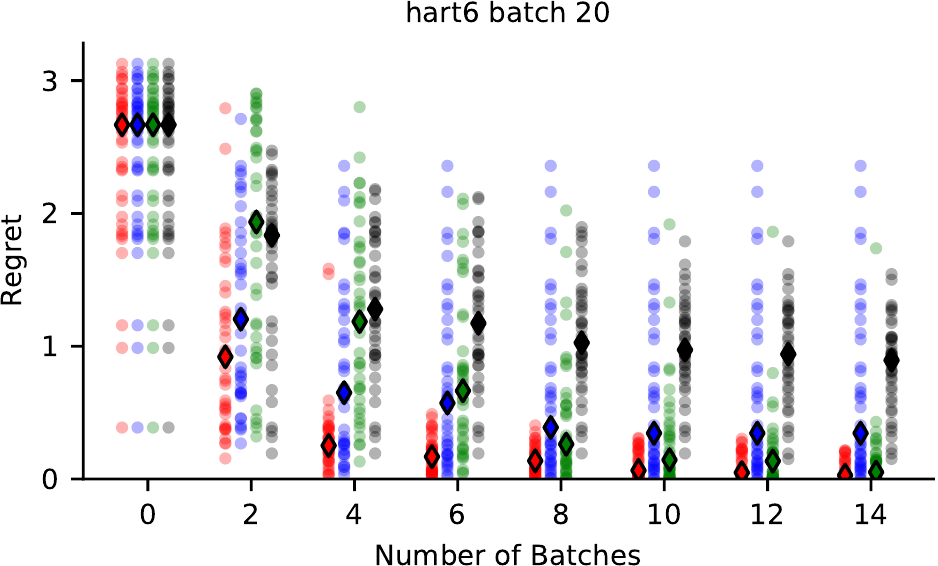}
	\end{subfigure}
	\caption{BO of \emph{batch size 20} on the challenging Hartmann 6-d Eggholder 2-d functions where OEI exhibits clearly superior performance. Red, blue, green and black dots depict runs of OEI, LP-EI, BLCB and Random algorithms respectively. Diamonds depict the median regret for each algorithm. Compare the above results with Figure \ref{fig:SyntheticBatch5} for the same runs with a smaller batch size. Note that QEI is not included in this case, as it does not scale to large batch sizes (see Table \ref{table:Timings}).}
	\label{fig:SyntheticBatch20}
\end{figure}
Next, we evaluate the performance of OEI in minimizing synthetic benchmark functions. The functions considered are: the Six-Hump Camel function defined on $[-2, 2] \times [-1, 1]$, the Hartmann 6 dimensional function defined on $[0, 1]^6$ and the Eggholder function, defined on $[-512, 512]^2$. We compare the performance of OEI against QEI, LP-EI and BLCB as well as random uniform sampling. The initial dataset consists of 10 random points for all the functions. A Matern 3/2 kernel is used for the GP modeling \citep{Rasmussen2005}. As all of the considered functions are noiseless, we set the likelihood variance to a fixed small number $10^{-6}$ for numerical reasons. For the purpose of generality, the input domain of every function is scaled to $[-0.5, 0.5]^n$ while the observation dataset $y_d$ is normalized at each iteration, such that $\mathrm{Var}[y^d] = 1$. The same transformations are applied to QEI, LP-EI and BLCB for reasons of consistency. All the acquisition functions except OEI are optimized with the quasi-Newton L-BFGS-B algorithm \citep{Fletcher1987} with 20 random restarts. We use point estimates for the kernel's hyperparameters obtained by maximizing the marginal likelihood via L-BFGS-B restarted on 20 random initial points.

First, we consider a small-scale scenario of \emph{batch size 5}. The results of 40 runs of Bayesian Optimization on a mixture of Cosines, the Six-Hump Camel, Eggholder, and 6-d Hartmann functions are depicted in Figure \ref{fig:SyntheticBatch5}. In these functions, OEI approaches the minimums faster than QEI and BLCB while considerably outperforming LP-EI, which exhibits outliers with bad behavior. The explorative nature of OEI can be observed in the optimization of the Hartmann function. OEI quickly reaches the vicinity of the minimum, but then decides not to refine the solution further but explore instead the rest of the 6-d space. Increasing the \emph{batch size} to \emph{20} for the challenging Eggholder and Hartmann functions shows a further advantage for OEI. Indeed, as we can observe in Figure \ref{fig:SyntheticBatch20}, OEI successfully exploits the increased batch size. BLCB also improves its performance though not to the extent of OEI. In contrast, LP-EI fails to manage the increased batch size. This is partially expected due to the heuristic based nature of LP-EI: the Lipschitz constant estimated by LP-EI is rarely suitable for all the 20 locations of the suggested batch. Even worse, LP-EI's performance is \emph{deteriorated} as compared to smaller batch sizes. LP-EI is plagued by numerical issues in the calculation of its gradient, and suggests multiple re-evaluations of the same points. This multiple re-samplings affects the GP modeling, resulting in an inferior overall BO performance.

\subsection{Real world example: Tuning a Reinforcement Learning Algorithm on various tasks}
\begin{figure}[!ht]
	\centering
	\begin{subfigure}{.49\textwidth}
		\centering
		\includegraphics[width=1\textwidth]{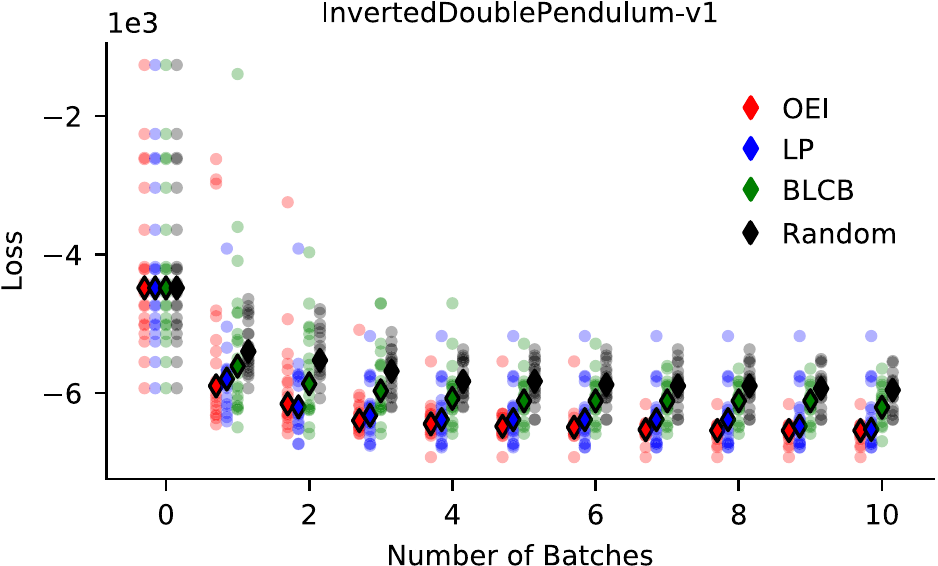}
	\end{subfigure}
	\centering
	\begin{subfigure}{.49\textwidth}
		\centering
		\includegraphics[width=1\textwidth]{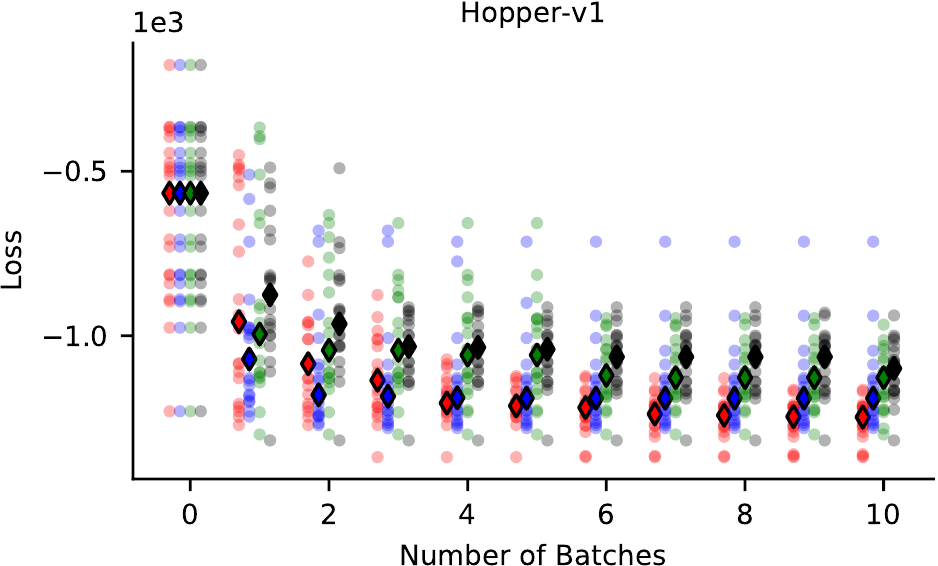}
	\end{subfigure}
	\centering
	\begin{subfigure}{.49\textwidth}
		\centering
		\includegraphics[width=1\textwidth]{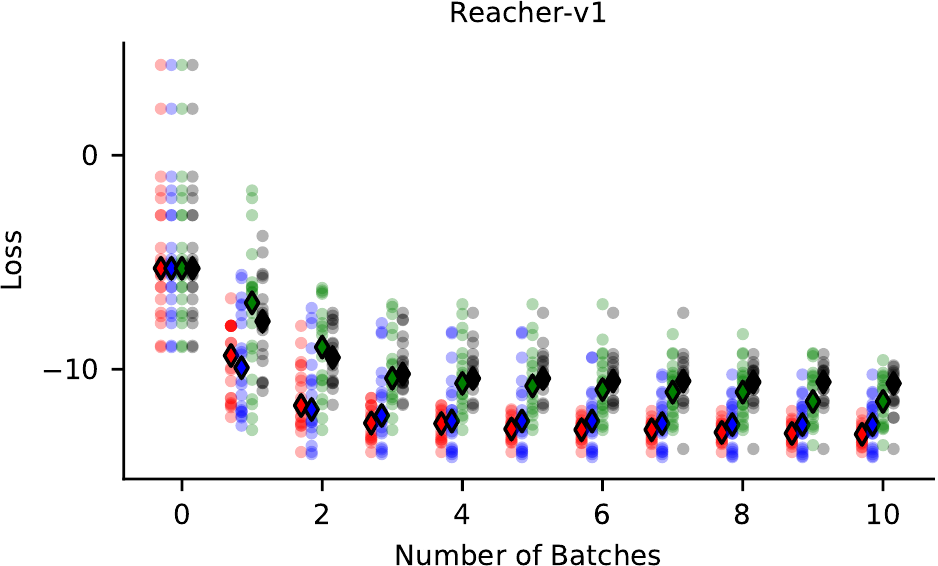}
	\end{subfigure}
	\centering
	\begin{subfigure}{.49\textwidth}
		\centering
		\includegraphics[width=1\textwidth]{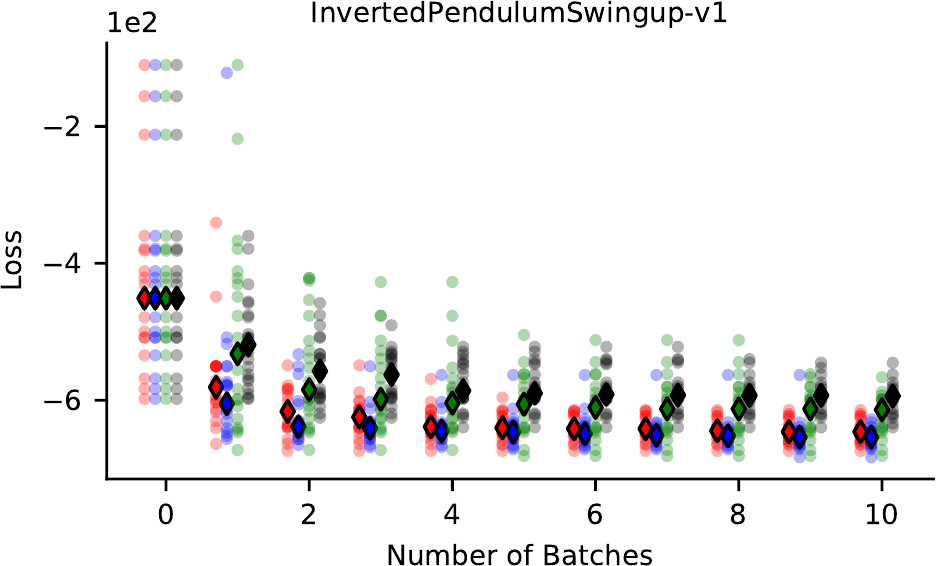}
	\end{subfigure}
	\caption{BO of batch size 20 on tuning PPO on a variety of robotic tasks. Red, blue, green and black dots depict runs of OEI, LP-EI, BLCB and Random algorithms respectively. Diamonds depict the median loss of all the runs for each algorithm.}
	\label{fig:PPOResults}
\end{figure}
Finally we perform Bayesian Optimization to tune Proximal Policy Optimization (PPO), a state-of-the-art Deep Reinforcement Learning algorithm that has been shown to outperform several policy gradient reinforcement learning algorithms \citep{Schulman2017}. The problem is particularly challenging, as deep reinforcement learning can be notoriously hard to tune, without any guarantees about convergence or performance. We use the implementation \citet{Dhariwal2017} published by the authors of PPO and tune the reinforcement algorithm on 4 \texttt{OpenAI~Gym} tasks (Hopper, InvertedDoublePendulum, Reacher and InvertedPendulumSwingup) using the \texttt{Roboschool} robot simulator. We tune a set of 5 hyper-parameters which are listed in Table \ref{table:hyperparameters}.  We define as objective function the negative average reward per episode over the entire training period ($4\cdot 10^5$ timesteps), which favors fast learning \citep{Schulman2017}. All of the other parameters are the same as the original implementation \citep{Schulman2017}.

We run Bayesian Optimization with batch size of 20, with the same modeling, preprocessing and optimization choices as the ones used in the benchmark functions. The results of 20 runs are depicted in Figure \ref{fig:PPOResults}. OEI outperforms BLCB, which performs comparably to Random search, and LP-EI, which exhibits high variance with occasional outliers stuck in inferior solutions.
\begin{table}[t]
	\caption{List of PPO's Hyperparameters used for tuning. Items with asterisk are tuned in the log-space.}
	\label{table:hyperparameters}
	\vskip 0.15in
	\begin{center}
		\begin{tabular}{cc}
			\toprule
			{Hyperparameter}           & {Range}                      \\
			\midrule
			Adam step-size             & $[10^{-5}, 10^{-3}]^*$       \\
			Clipping parameter         & $[0.05, 0.5]$                \\
			Batch size                 & $24, \dots, 256$             \\
			Discount Factor $(\gamma)$ & $1 - [10^{-3}, 10^{-3/2}]^*$ \\
			GAE parameter $(\lambda)$  & $1 - [10^{-2}, 10^{-1}]^*$   \\
			\bottomrule
		\end{tabular}
	\end{center}
	\vskip -0.1in
\end{table}

\section{Conclusions}
We have introduced a new acquisition function that is a tractable, probabilistic relaxation of the multi-point Expected Improvement, drawing ideas from the Distributionally Ambiguous Optimization community. Novel theoretical results allowed inexpensive calculation of first and second derivative information resulting in efficient Bayesian Optimization on large batch sizes.

\acks{This work was supported by the EPSRC AIMS CDT grant EP/L015987/1 and Schlumberger. The authors would like to acknowledge the use of the University of Oxford Advanced Research Computing (ARC) facility in carrying out this work. \url{http://dx.doi.org/10.5281/zenodo.22558}. Many thanks to Leonard Berrada for various useful discussions.}

% Manual newpage inserted to improve layout of sample file - not
% needed in general before appendices/bibliography.

% \newpage
\appendix

\section{Value of the Expected Improvement Lower Bound}
\label{app:PrimalOptProb}

In this section we provide a proof of the first of our main results, Theorem \ref{thm:ValueTheorem}, which establishes that for any $\Sigma \succ 0$ one can compute the value of our optimistic lower bound function
\begin{equation}\label{app:lowerBoundAbstract}
	\inf_{\mathbb P \in \mathcal{P}(\mu, \Sigma)}\mathbb{E}_{\mathbb{P}}[g(\xi)]
\end{equation}
via solution of a convex optimization problem in the form of a semidefinite program.

\subparagraph{{\it Proof of Theorem~\ref{thm:ValueTheorem}}:}~\\[1ex]
Recall that the set $\mathcal{P}(\mu, \Sigma)$ is the set of all distributions with mean $\mu$ and covariance $\Sigma$. Following the approach of \citet[Lemma 1]{Zymler2013}, we first remodel problem \eqref{app:lowerBoundAbstract} as:
\begin{equation}
	\begin{aligned}
		& {\inf_{\nu \in \mathcal{M}_+}}
		& & \int_{\mathbb{R}^k}
		{g(\xi)\nu(d\xi)}\\
		& \text{s.t.}
		& & \int_{\mathbb{R}^k}{\nu(d\xi)} = 1\\
		  &   &   & \int_{\mathbb{R}^k}{\xi \nu(d\xi)} = \mu \\
		  &   &   & \int_{\mathbb{R}^k}{\xi \xi^T
		\nu(d\xi)} = \Sigma + \mu
		\mu^T,\\
	\end{aligned}
	\label{eqn:PrimalWorstCase}
\end{equation}
where $\mathcal{M}_+$ represents the cone of nonnegative Borel measures on $\mathbb{R}^k$.  The optimization problem \eqref{eqn:PrimalWorstCase} is a semi-infinite linear program, with infinite dimensional decision variable $\nu$ and a finite collection of linear equalities in the form of moment constraints.

As shown by \citet{Zymler2013}, the dual of problem \eqref{eqn:PrimalWorstCase} has instead a finite dimensional set of decision variables and an infinite collection of constraints, and can be written as
\begin{equation}
	\begin{aligned}
		& {\text{sup}}
		  &   & \langle \Omega, M \rangle \\
		& \text{s.t.}
		  &   & \big[\xi^T \; 1 \big] M
		\big[\xi^T \; 1
		\big]^T \leq g(\xi) \quad \forall \xi \in
		\mathbb{R}^k,
	\end{aligned}
	\label{eqn:SemiInfiniteOptProb}
\end{equation}
with $M \in \mathbb{S}^{k+1}$ the decision variable and $\Omega\in \mathbb{S}^{k+1}$ the second order moment matrix of $\xi$ (see \eqref{eqn:Omega}). Strong duality holds between problems~\eqref{eqn:PrimalWorstCase} and~\eqref{eqn:SemiInfiniteOptProb} for any $\Omega \succ 0 \Leftrightarrow \Sigma \succ 0$, i.e.\ there is zero duality gap and their optimal values coincide.

The dual decision variables in \eqref{eqn:SemiInfiniteOptProb} form a matrix $M$ of Lagrange multipliers for the constraints in \eqref{eqn:PrimalWorstCase} that is block decomposable as
\begin{equation*}
	M =
	\left(
	\begin{array}{cc}
		M_{11}   & m_{12} \\
		m_{12}^T & m_{22} \\
	\end{array}
	\right),
\end{equation*}
where $M_{11} \in \mathbb{S}^k$ are multipliers for the second moment constraint, $m_{12} \in \mathbb{R}^k$ multipliers for the mean value constraint, and $m_{22}$ a scalar multiplier for the constraint that $\nu \in \mathcal M_+$ should integrate to $1$, i.e.\ that $\nu$ should be a probability measure.

For our particular problem, we have:
\begin{equation*}
	\begin{aligned}
		g(\xi) & = \min(\xi_{(1)}, \dots, \xi_{(k)}, \underline{y^d}) - \underline{y^d}     \\
		       & = \min(\xi_{(1)} - \underline{y^d}, \dots, \xi_{(k)} - \underline{y^d}, 0)
	\end{aligned}
\end{equation*}
as defined in \eqref{eqn:definition_g}, so that \eqref{eqn:SemiInfiniteOptProb} can be rewritten as
\begin{equation}
	\begin{aligned}
		& {\text{sup}}
		& & \langle \Omega, M \rangle\\
		& \text{s.t.}
		& & \big[\xi^T \; 1 \big] M
		\big[\xi^T \; 1
		\big]^T \leq \xi_{(i)} - \underline{y^d},\qquad \forall \xi \in
		\mathbb{R}^k \\
		  &   &   & \big[\xi^T \; 1 \big] M
		\big[\xi^T \; 1
		\big]^T \leq 0
		\quad \quad i = 1,\dots,k
	\end{aligned}
	\label{eqn:SemiInfiniteDualOptProb}
\end{equation}

The infinite dimensional constraints in \eqref{eqn:SemiInfiniteDualOptProb} can be replaced by the equivalent conic constraints
\begin{equation*}
	M -
	\begin{bmatrix}
		0       & e_i/2            \\
		e_i^T/2 & -\underline{y^d}
	\end{bmatrix}
	\preceq 0,
	\qquad i = 1,\dots,k
\end{equation*}
and $M \preceq 0$ respectively, where $e_i$ are the standard basis vectors in $\mathbb{R}^k$. Substituting the above constraints in \eqref{eqn:SemiInfiniteDualOptProb} results in \eqref{eqn:PrimalOptProb}, which proves the claim. \hfill\BlackBox

\section{Gradient of the Expected Improvement Lower Bound} \label{app:DerivativeCalculations}

In this section we provide a proof of our second main result, Theorem \ref{thm:gradientTheorem}, which shows that the gradient\footnote{Technically, the gradient is not defined, as $\Omega$ is by definition symmetric. We can get around this technicality by a slight abuse of notation allowing for a non-symmetric $\Omega$ such that $\Omega + \Omega^T \in \mathbb{S}^{k+1}_{++}$.} $\partial p / \partial \Omega$ of our lower bound function \eqref{eqn:ExpectedImprovementBounds} with respect to $\Omega$ coincides with the optimal solution of the semidefinite program \eqref{eqn:PrimalOptProb}.  We will find it useful to exploit also the dual of this SDP, which we can write as
\begin{equation}
	\begin{aligned}
		\quad & {\text{inf}}
		& & \sum_{i=1}^{k} \langle Y_i, C_i \rangle \\
		& \text{s.t.}
		& & Y_i \succeq 0, \quad i = 0,\dots,k \\
			&   &   & \sum_{i=0}^{k}Y_i = \Omega.
	\end{aligned}
	\label{eqn:DualOptProb}
\end{equation}
and the Karush-Kuhn-Tucker conditions for the pair of primal and dual solutions $\bar M$, $\{\bar Y_i\}$:
\begin{align}
	\label{eqn:KKT1}
	C_i - \bar M                         & \succeq 0 \\
	\label{eqn:KKT2}
	\bar Y_i                             & \succeq 0 \\
	\label{eqn:KKT3}
	\langle \bar Y_i, \bar M - C_i \rangle= 0
	\Rightarrow  \bar Y_i (\bar M - C_i) & = 0       \\
	\label{eqn:KKT4}
	\frac
	{\partial \mathcal{L}(M,\Omega)}
	{\partial M} \Bigg|_{\bar M} = 0
	\Rightarrow
	\sum_{i=0}^{k}\bar Y_i               & = \Omega,
\end{align}
where $\mathcal{L}$ denotes the Lagrangian of \eqref{eqn:PrimalOptProb}.

Before proving Theorem \ref{thm:gradientTheorem} we require three ancillary results. The first of these results establishes that any feasible point $M$ for the optimization problem \eqref{eqn:PrimalOptProb} has strictly negative definite principal minors in the upper left hand corner.
\begin{lemma}
	For any feasible $M \in \mathbb{S}^{k+1}$ of \eqref{eqn:PrimalOptProb} the upper left $k \times k$ matrix $M_{11}$ is negative definite.
	\label{lem:PositiveDefinite}
\end{lemma}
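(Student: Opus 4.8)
The plan is to start from the $i=0$ constraint and then bootstrap strictness from the remaining ones. Since $C_0 = 0$, feasibility of $M$ immediately gives $M \preceq 0$, hence the leading $k\times k$ block satisfies $M_{11}\preceq 0$. To upgrade this to $M_{11}\prec 0$ I would argue by contradiction: suppose $M_{11}$ is singular and fix $v\in\mathbb{R}^k$, $v\neq 0$, with $v^T M_{11} v = 0$ (such a $v$ exists whenever $M_{11}\not\prec 0$, as $M_{11}\preceq 0$).

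First I would use negative semidefiniteness of $M$ to locate $v$ in a degenerate subspace. Writing $M = \begin{bmatrix} M_{11} & m_{12}\\ m_{12}^T & m_{22}\end{bmatrix}$ as in Appendix~\ref{app:PrimalOptProb}, the vector $w = (v^T,0)^T$ satisfies $w^T M w = v^T M_{11} v = 0$; since $M\preceq 0$, the elementary fact that a negative semidefinite matrix annihilates every vector on which its quadratic form vanishes gives $Mw = 0$, and reading off the last coordinate yields $m_{12}^T v = 0$. It follows that along the ray $\xi = tv$ the quadratic form collapses, $[\xi^T\,1]\,M\,[\xi^T\,1]^T = t^2 v^T M_{11} v + 2t\,m_{12}^T v + m_{22} = m_{22}$, independently of $t$.

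Next I would invoke one of the remaining constraints. Since $v\neq 0$ there is an index $j$ with $v_j\neq 0$; recall from the derivation of \eqref{eqn:PrimalOptProb} that $M - C_j\preceq 0$ is equivalent to $[\xi^T\,1]\,M\,[\xi^T\,1]^T \le \xi_{(j)} - \underline{y^d}$ for all $\xi\in\mathbb{R}^k$. Evaluating this at $\xi = tv$ and using the previous step gives $m_{22}\le t v_j - \underline{y^d}$ for every $t\in\mathbb{R}$, which is impossible because the right-hand side tends to $-\infty$ as $t v_j\to-\infty$ while $m_{22}$ is a fixed scalar. (Equivalently, one could evaluate the quadratic form of $M - C_j$ directly on $(t v^T, 1)^T$ and reach the same contradiction.) This shows no such $v$ exists, so $M_{11}\prec 0$.

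I do not expect any genuine obstacle here — the argument is short. The only steps needing a little care are the linear-algebra fact that converts $v^T M_{11} v = 0$ into $M_{11}v = 0$ (and hence $m_{12}^T v = 0$), and the observation that the constraint $M\preceq 0$ on its own yields only $M_{11}\preceq 0$: strictness genuinely relies on the full family $\{C_i\}_{i=1}^k$, each member of which excludes one degenerate direction of $M_{11}$.
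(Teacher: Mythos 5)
Your proof is correct, and it takes a genuinely different route from the paper's. The paper works through Schur complements: it first shows $m_{22}+\underline{y^d}<0$, then for each constraint $M-C_i\preceq 0$ (and for $M\preceq 0$ itself) extracts the bound $M_{11}\preceq (m_{12}-e_i/2)(m_{22}+\underline{y^d})^{-1}(m_{12}-e_i/2)^T$, sums these rank-one upper bounds, and argues that the vectors $\{m_{12},\,m_{12}-e_i/2\}$ span $\mathbb{R}^k$ so the sum is negative definite; it also needs a separate case analysis for $m_{22}=0$ versus $m_{22}<0$. You instead argue by contradiction on a single degenerate direction $v$ of $M_{11}$: the fact that a negative semidefinite matrix annihilates any vector with vanishing quadratic form gives $M_{11}v=0$ and $m_{12}^Tv=0$, so the quadratic form of $M$ is constant equal to $m_{22}$ along the ray $\xi=tv$, and then any single constraint $M-C_j\preceq 0$ with $v_j\neq 0$ forces $m_{22}\le tv_j-\underline{y^d}$ for all $t$, which is absurd. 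Your argument avoids both the Schur-complement machinery and the case split on $m_{22}$, and it isolates exactly which constraint kills each degenerate direction; the paper's version, in exchange, produces an explicit quantitative negative-definite upper bound on $M_{11}$ rather than a bare contradiction. Both are sound; the one point worth stating carefully in a write-up is the equivalence (or, for your purposes, just the implication) between $M-C_j\preceq 0$ and the pointwise inequality $[\xi^T\,1]M[\xi^T\,1]^T\le \xi_{(j)}-\underline{y^d}$, which you correctly note can be bypassed by evaluating the quadratic form of $M-C_j$ directly on $(tv^T,1)^T$.
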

\begin{proof}
	Let
	\[
		M =
		\begin{bmatrix}
			M_{11}   & m_{12} \\
			m_{12}^T & m_{22}
		\end{bmatrix}
	\]
	where $M_{11} \in \mathbb{S}^{k}, m_{12} \in \mathbb{R}^{k}$ and $m_{22} \in \mathbb{R}$.

	From \eqref{eqn:PrimalConstraints} we can infer that $m_{22} + \underline{y^d} < 0$, otherwise \eqref{eqn:PrimalConstraints} would require $m_{12} + e_i/2 = 0 \; \forall i = 1, \dots k$, which is impossible.

	Since $M \preceq 0$, we have $m_{22} \leq 0$. Assume though, for now, that $m_{22} < 0$. Applying then a standard Schur complement identity in \eqref{eqn:PrimalConstraints} results in:
	\begin{equation*}
		\begin{aligned}
			M_{11} & \preceq (m_{12} - e_i) (m_{22} + \underline{y^d})^{-1} (m_{12} - e_i)^T
			\\
			M_{11} & \preceq m_{12} m_{22}^{-1} m_{12}^T
			\quad i = 1, \dots, k
		\end{aligned}
	\end{equation*}
	Summing the above results in
	\begin{equation*}
		M_{11} \preceq \frac{(m_{22} + \underline{y^d})^{-1}}{k + 1} \sum_{i=1}^{k}(m_{12} - e_i) (m_{12} - e_i)^T
		+ \frac{m_{22}^{-1}}{k + 1} m_{12}m_{12}^T,
		\label{eqn:SchurResult}
	\end{equation*}
	which results in $M_{11} \prec 0$, since
	\text{span}$\big(\{m_{12}, \{m_{12} - e_i\}_{i=i,\dots,k}\}\big) \supseteq$
	\text{span}$\big(\{m_{12} - m_{12} + e_i\}_{i=i,\dots,k}\}\big)$
	$= \mathbb{R}^k$.

	Finally, in the case where $m_{22} = 0$ we have $m_{12} = 0$, since $M \preceq 0$. Applying the above results in $M_{11} \preceq (m_{22}+\underline{y^d})^{-1}/k \sum_{i=1}^{k}\frac{1}{k}e_i e_i^T \prec 0.$
\end{proof}

The second auxiliary results lists some useful properties of the dual solution:
\begin{lemma} \label{lem:dualProperties}
	The optimal Lagrange multipliers of \eqref{eqn:PrimalOptProb} are of rank one with 
	$\mathcal{R}(\bar Y_i) = \mathcal{N}(\bar M - C_i), \; \forall i = 0,\dots,k$, where $\mathcal{N}(\cdot)$ and $\mathcal{R}(\cdot)$ denote the nullspace and the range of a matrix.
\end{lemma}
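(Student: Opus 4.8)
The plan is to extract the rank-one structure of each $\bar Y_i$ from the complementary slackness condition \eqref{eqn:KKT3} together with the strict definiteness guaranteed by Lemma~\ref{lem:PositiveDefinite}. First I would record the two consequences of \eqref{eqn:KKT3}: since $\bar Y_i \succeq 0$ and $C_i - \bar M \succeq 0$, the scalar identity $\langle \bar Y_i, \bar M - C_i\rangle = 0$ is equivalent to the matrix identity $\bar Y_i(\bar M - C_i) = 0$, which forces $\mathcal{R}(\bar Y_i) \subseteq \mathcal{N}(\bar M - C_i)$. The substance of the lemma is the reverse inclusion together with the claim that this common subspace is one-dimensional, i.e.\ that $\operatorname{rank}(\bar Y_i) = \dim \mathcal{N}(\bar M - C_i) = 1$.

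The key step is to pin down $\dim\mathcal{N}(\bar M - C_i)$. For $i \geq 1$, write $\bar M - C_i$ in the block form
\[
	\bar M - C_i =
	\begin{bmatrix}
		\bar M_{11} & \bar m_{12} - e_i/2 \\
		(\bar m_{12} - e_i/2)^T & \bar m_{22} + \underline{y^d}
	\end{bmatrix},
\]
and similarly $\bar M - C_0 = \bar M$. By Lemma~\ref{lem:PositiveDefinite} the top-left block $\bar M_{11}$ is negative definite, hence invertible; since $\bar M - C_i \preceq 0$, a Schur-complement argument shows that $\bar M - C_i$ has exactly $k$ negative eigenvalues and that its nullspace has dimension at most $1$ — it is precisely $1$ when the Schur complement $(\bar m_{22} + \underline{y^d}) - (\bar m_{12} - e_i/2)^T \bar M_{11}^{-1}(\bar m_{12} - e_i/2)$ vanishes, and $0$ otherwise. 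So each $\mathcal{N}(\bar M - C_i)$ is either trivial or a line, and consequently $\operatorname{rank}(\bar Y_i) \leq 1$ from the first step.

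It remains to exclude the degenerate case in which some $\bar Y_i = 0$ while $\bar M - C_i$ is strictly negative definite; this is where I expect the only real friction. I would argue it via the dual feasibility constraint $\sum_{i=0}^k \bar Y_i = \Omega \succ 0$ from \eqref{eqn:KKT4}: if some $\bar Y_i$ vanished, the remaining $k$ rank-one terms would have to sum to the full-rank $(k{+}1)\times(k{+}1)$ matrix $\Omega$, which is impossible since a sum of $k$ matrices of rank $\leq 1$ has rank $\leq k$. Hence every $\bar Y_i$ has rank exactly $1$, which in turn forces $\dim\mathcal{N}(\bar M - C_i) \geq 1$, and combined with the upper bound above gives equality $\mathcal{R}(\bar Y_i) = \mathcal{N}(\bar M - C_i)$ for all $i = 0,\dots,k$, as claimed. (One should also check that $\Omega \succ 0$ indeed holds under the hypothesis $\Sigma \succ 0$ via \eqref{eqn:Omega}, which was already observed in the proof of the Corollary.)
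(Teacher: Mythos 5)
Your proposal is correct and follows essentially the same route as the paper's proof: Lemma~\ref{lem:PositiveDefinite} forces $\rank(\bar M - C_i)\geq k$ (you via Schur-complement rank additivity on the invertible block $\bar M_{11}$, the paper by evaluating the quadratic form on the subspace $\{[x^T\;0]^T\}$), complementary slackness then gives $\rank(\bar Y_i)\leq 1$, and dual feasibility $\sum_{i=0}^k \bar Y_i = \Omega \succ 0$ forces each rank to equal one, yielding $\mathcal{R}(\bar Y_i)=\mathcal{N}(\bar M - C_i)$. No gaps.
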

\begin{proof}
	Lemma~\ref{lem:PositiveDefinite} implies that $[x^T 0](\bar M-C_i)[x^T 0]^T = [x^T 0]\bar M[x^T 0]^T < 0, \forall x\in \mathbb{R}^k$ (recall that $C_i$ is nonzero only in the last column or the last row), which means that $\rank(\bar M-C_i) \geq k$.  Due to the complementary slackness condition \eqref{eqn:KKT3}, the span of $\bar Y_i$ is orthogonal to the span of $\bar M-C_i$ and consequently $\rank(\bar Y_i) \leq 1$. However, according to \eqref{eqn:KKT4} we have
	\[
		\rank \sum_{i=0}^{k} \bar Y_i = \rank(\Omega)
		\stackrel{\Omega \succ 0}{\Longrightarrow}
		\sum_{i = 0}^{k} \rank(\bar Y_i) \geq k+1
	\]
	which results in
	\[
		\rank(\bar M - C_i) = k, \quad \rank(\bar Y_i) = 1,
	\]
	and, using \eqref{eqn:KKT3}:
	\[
		\mathcal{R}(\bar Y_i) = \mathcal{N}(\bar M-C_i),
		\quad
		i = 0,\dots,k.
	\]
\end{proof}

Our final ancillary result considers the (directional) derivative of the function $p$ when its argument is varied linearly along some direction $\bar \Omega$.
\begin{lemma}
	Given any $\bar\Omega \in \mathbb{S}^{k+1}$ and any moment matrix $\Omega \in \mathbb{S}_{++}^{k+1}$, define the  scalar function $q(\cdot\,; \Omega) : \mathbb{R} \to \mathbb R$ as
	\[
		q(\gamma; \Omega) \eqdef p(\Omega + \gamma \bar \Omega).
	\]
	Then $q(\cdot\,;\Omega)$ is differentiable at $0$ with $\partial q(\gamma; \Omega)/ \partial \gamma |_{\gamma = 0} = \langle \bar \Omega, \bar M(\Omega) \rangle$, where $\bar M(\Omega)$ is the optimal solution of \eqref{eqn:PrimalOptProb} at $\Omega$.
	\label{lem:appSupportLemma1}
\end{lemma}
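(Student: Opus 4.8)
### Proof plan for Lemma~\ref{lem:appSupportLemma1}

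The plan is to show that $q(\cdot\,;\Omega)$, being a pointwise supremum of affine functions of $\gamma$, is convex, and then to pin down its derivative at $0$ by sandwiching the difference quotient between two quantities that both converge to $\langle \bar\Omega, \bar M(\Omega)\rangle$. First I would observe that for each fixed feasible $M$ (the feasible set of \eqref{eqn:PrimalOptProb} does not depend on $\Omega$), the map $\gamma \mapsto \langle \Omega + \gamma\bar\Omega, M\rangle = \langle \Omega, M\rangle + \gamma\langle\bar\Omega, M\rangle$ is affine, so $q(\gamma;\Omega) = \sup_{M \text{ feasible}} \langle \Omega + \gamma\bar\Omega, M\rangle$ is convex in $\gamma$ as a supremum of affine functions. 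Convexity guarantees that the one-sided derivatives $q'_+(0)$ and $q'_-(0)$ exist (the value function is finite near $\gamma = 0$ because $\Omega \succ 0$ is an open condition, so $\Omega + \gamma\bar\Omega \succ 0$ for small $|\gamma|$, and Theorem~\ref{thm:ValueTheorem} together with the boundedness argument implicit in Lemma~\ref{lem:PositiveDefinite} keeps $p$ finite), and that $q'_-(0) \le q'_+(0)$; it remains to squeeze them together.

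The key step is a two-sided bound via weak duality / suboptimality. For the lower bound on the difference quotient from the right: since $\bar M(\Omega)$ is feasible for \eqref{eqn:PrimalOptProb} at every $\Omega + \gamma\bar\Omega$, we have $q(\gamma;\Omega) \ge \langle \Omega + \gamma\bar\Omega, \bar M(\Omega)\rangle = q(0;\Omega) + \gamma\langle\bar\Omega,\bar M(\Omega)\rangle$, hence $q'_+(0) \ge \langle\bar\Omega,\bar M(\Omega)\rangle$ and, using $\gamma < 0$, also $q'_-(0) \le \langle\bar\Omega,\bar M(\Omega)\rangle$. For the matching inequality in each direction I would use the dual problem \eqref{eqn:DualOptProb}: let $\{\bar Y_i\}$ be the optimal dual solution at $\Omega$, which by Lemma~\ref{lem:dualProperties} has $\sum_i \bar Y_i = \Omega$ and the $\bar Y_i$ are rank one; perturb it to a feasible dual point at $\Omega + \gamma\bar\Omega$ by adding the perturbation $\gamma\bar\Omega$ to one block (or distributing it), noting that for small $|\gamma|$ positive-semidefiniteness of the perturbed multipliers can be maintained when the original ones are chosen appropriately — this is where Lemma~\ref{lem:dualProperties} is used. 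By weak duality $q(\gamma;\Omega)$ is bounded above by the objective of this feasible dual point, which is $q(0;\Omega) + \gamma\langle\bar\Omega,\bar M(\Omega)\rangle + o(\gamma)$ using complementary slackness \eqref{eqn:KKT3} to identify the first-order term; this yields $q'_+(0) \le \langle\bar\Omega,\bar M(\Omega)\rangle$ and $q'_-(0) \ge \langle\bar\Omega,\bar M(\Omega)\rangle$. Combining, $q'_+(0) = q'_-(0) = \langle\bar\Omega,\bar M(\Omega)\rangle$, establishing differentiability at $0$ with the claimed derivative.

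The main obstacle is constructing the perturbed dual-feasible point and controlling its objective to first order. Naively adding $\gamma\bar\Omega$ to some $\bar Y_j$ may destroy $\bar Y_j \succeq 0$ for either sign of $\gamma$, since $\bar\Omega$ is an arbitrary symmetric matrix. The cleanest route is probably to invoke the standard sensitivity/envelope theorem for convex conic programs under strict feasibility: \eqref{eqn:PrimalOptProb} satisfies Slater's condition at $\Omega \succ 0$ (by Lemma~\ref{lem:PositiveDefinite} the feasible set is nonempty with, in fact, interior points relative to the problem geometry), its dual \eqref{eqn:DualOptProb} is attained, and the primal optimizer $\bar M(\Omega)$ is unique (which will follow from the strict definiteness in Lemma~\ref{lem:PositiveDefinite}); under these conditions the value function of a linear program in the cost vector is differentiable with gradient equal to the unique optimizer. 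I would therefore either (a) cite this envelope-theorem fact directly, or (b) carry out the explicit difference-quotient sandwich above, using uniqueness of $\bar M(\Omega)$ to upgrade the convex subdifferential $\partial q(0) = \{\langle\bar\Omega, M\rangle : M \text{ optimal at }\Omega\}$ to the single point $\langle\bar\Omega,\bar M(\Omega)\rangle$. Route (b) is self-contained and fits the paper's style, so that is the one I would write out in detail.
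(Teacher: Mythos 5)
Your overall strategy---convexity of $q$ as a supremum of affine functions, the feasibility inequality $q(\gamma)\ge q(0)+\gamma\langle\bar\Omega,\bar M(\Omega)\rangle$ showing $\langle\bar\Omega,\bar M(\Omega)\rangle\in\partial q(0)$, and then uniqueness of the optimizer to collapse the subdifferential to a point---is essentially the paper's route: the paper verifies exactly two conditions, namely $0\in\interior\Gamma_\Omega$ (via strict primal and dual feasibility, which you also note) and uniqueness of the primal solution, and then invokes Goldfarb--Scheinberg to conclude subgradient existence plus differentiability. Your first proposed mechanism for the matching upper bound (perturbing the dual multipliers by $\gamma\bar\Omega$) is indeed problematic for the reason you identify, and your fallback to the envelope/subdifferential argument is the right call.

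The genuine gap is the uniqueness of $\bar M(\Omega)$, which you assert ``will follow from the strict definiteness in Lemma~\ref{lem:PositiveDefinite}.'' It does not: that lemma says every feasible $M$ has $M_{11}\prec 0$, but the objective $\langle\Omega,\cdot\rangle$ is linear, and a linear functional maximized over a convex set all of whose points have negative-definite upper-left blocks can perfectly well have a face of maximizers. Uniqueness is the heart of this lemma and requires real work: the paper first shows (Lemma~\ref{lem:dualProperties}) that the optimal dual multipliers are rank one with $\mathcal{R}(\bar Y_i)=\mathcal{N}(\bar M-C_i)$, then argues that for any alternative optimizer $\tilde M=\bar M+\delta M$ the vectors $\tilde y_i$ must also annihilate $\bar M-C_i$, forces $\tilde Y_i=\lambda_i^2\bar Y_i$, uses linear independence of the ranges (from $\sum_i\bar Y_i=\Omega\succ 0$) to get dual uniqueness, and finally recovers primal uniqueness from the KKT identity $\bar M=\Omega^{-1}\sum_{i=0}^{k}\bar Y_iC_i$. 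Without some version of this chain your argument does not close; you would also want to justify the identification $\partial q(0)=\{\langle\bar\Omega,M\rangle: M\ \text{optimal}\}$ (the feasible set is unbounded, so this is not a bare Danskin application), which is precisely what the cited sensitivity result supplies once $0\in\interior\Gamma_\Omega$ is established.
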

\begin{proof}
	Define the set $\Gamma_\Omega$ as
	\[
		\Gamma_\Omega := \set{\gamma}{\gamma\in\domain q(\cdot\,;\Omega)}
		= \set{\gamma}{\left(\Omega + \gamma\bar\Omega\right) \in \domain p},
	\]
	i.e.\ the set of all $\gamma$ for which problem \eqref{eqn:PrimalOptProb} has a bounded solution given the moment matrix $\Omega + \gamma\bar\Omega$.
	In order to prove the result it is then sufficient to show:
	\begin{enumerate}[label=\roman*]
		\item $0 \in \interior \Gamma_\Omega$, and
		\item The solution of \eqref{eqn:PrimalOptProb} at $\Omega$ is
		      unique.
	\end{enumerate}

	The remainder of the proof then follows from \citet[Lemma 3.3]{Goldfarb1999}, wherein it is shown that $0\in \interior \Gamma_\Omega$ implies that $\langle \bar \Omega, \bar M(\Omega) \rangle$ is a subgradient of $q(\cdot\,; \Omega)$ at $0$, and subsequent remarks in \citet{Goldfarb1999} establish that uniqueness of the solution $M(\Omega)$ ensure differentiability.

	We will now show that both of the conditions (i) and (ii) above are satisfied.
	\subparagraph{{\it (i):
		Proof that $0 \in \interior \Gamma_\Omega$:}}~\\[1ex]
	It is well-known that if both of the primal and dual problems \eqref{eqn:PrimalOptProb} and \eqref{eqn:DualOptProb} are strictly feasible then their optimal values coincide, i.e.\ Slater's condition holds and we obtain strong duality; see \citep[Section 5.2.3]{Boyd2004} and \citep{Ramana1997}.

	For \eqref{eqn:PrimalOptProb} it is obvious that one can construct a strictly feasible point. For \eqref{eqn:DualOptProb}, $Y_i = \Omega/(k+1)$ defines a strictly feasible point for any $\Omega \succ 0$. Hence \eqref{eqn:PrimalOptProb} is solvable for any $\Omega + \gamma \bar \Omega$ with $\gamma$ sufficiently small. As a result, $0 \in \interior \Gamma$.

	\subparagraph{{\it (ii): Proof that the solution to
		\eqref{eqn:PrimalOptProb} at $\Omega$ is unique:}}~\\[1ex]%
	We will begin by showing that the range of the dual variables $\mathcal{R}(Y_i), i = 0, \dots k$ remain the same for every primal-dual solution. Assume that there exists another optimal primal-dual pair denoted by $\tilde M = \bar M + \delta M$, and ${\tilde Y_i}$.
	Due to Lemma \ref{lem:dualProperties}, there exist $\bar y_i, \tilde y_i \in \mathbb{R}^{k+1}$ such that
	\begin{equation}
		\bar Y_i = \bar y_i \bar y_i^T, \quad \tilde Y_i = \tilde y_i \tilde y_i^T  \quad \forall i = 0,\dots,k.
		\label{eqn:YDecomposition}
	\end{equation}
	Obviously, $\tilde y_i \in \mathcal{R}(\tilde Y_i) = \mathcal{N}(\tilde M - C_i)$ and, by definition, we have
	\begin{equation}
		\label{eqn:deltaMNullVectors}
		\tilde y_i^T (\tilde M - C_i) \tilde y_i =
		0 \quad i = 0,\dots,k.
	\end{equation}
	Moreover, as $\bar M$ is feasible we have $\tilde y_i^T(\bar M-C_i) \tilde y_i \leq 0$, resulting in
	\begin{equation}
		\label{eqn:deltaMNegativity}
		\tilde y_i^T \delta M \tilde y_i \geq 0,
		\quad i = 0,\dots,k.
	\end{equation}
	Since $\bar M$ and $\tilde M$ have the same objective value we conclude that $\langle \Omega, \delta M \rangle = 0$. Moreover, according to \eqref{eqn:KKT4} and \eqref{eqn:YDecomposition} we can decompose $\Omega$ as $\sum _{i=0} ^k \tilde y_i \tilde y_i^T$. Hence
	\begin{equation*}
		\begin{aligned}
			\tr(\Omega\delta M) = 0
			& \Longrightarrow
			\quad \tr(\delta M \sum_{i=0}^{k} \tilde y_i \tilde y_i^T) = 0
			\Longrightarrow
			\quad \sum_{i=0}^{k} \tr (\delta M \tilde y_i \tilde y_i^T)= 0 \\
			& \Longrightarrow
			\quad \sum_{i=0}^{k} \tilde y_i^T \delta M \tilde y_i = 0
			\stackrel{\eqref{eqn:deltaMNegativity}}{\Longrightarrow}\quad
			\tilde y_i^T \delta M \tilde y_i = 0
			& \quad \forall i = 0,\dots,k \\
			& \stackrel{\eqref{eqn:deltaMNullVectors}}{\Longrightarrow}\quad
			\tilde y_i (\bar M - C_i) \tilde y_i^T = 0
			& \quad \forall i = 0,\dots,k.
		\end{aligned}
	\end{equation*}
	Hence, $\tilde y_i$ is, like $\bar y_i$, a null vector of $\bar M - C_i$. Since the null space of $\bar M - C_i$ is of rank one, we get $\tilde y_i = \lambda_i \bar y_i$ for some $\lambda_i \in \mathbb{R}$, resulting in, $\bar Y_i = \lambda_i^2 \tilde Y_i$.

	Now we can show that the dual solution is unique. Assume that $\bar Y_i \neq \tilde Y_i$, i.e. $\lambda_i^2 \neq 1$ for some $i = \{1, \dots, k\}$. Feasibility of $\tilde Y_i$ and $\bar Y_i$ gives
	\begin{equation*}
		\begin{array}{l@{}|}
			\begin{aligned}
			\sum _{i=0} ^k \bar Y_i = \Omega \Leftrightarrow \sum _{i=0} ^k \lambda_i^2 \tilde Y_i & = \Omega \quad \\
			\sum _{i=0} ^k \tilde Y_i                                                              & = \Omega
			\end{aligned}
		\end{array}
		\Rightarrow
		\sum _{i=0} ^k (1 - \lambda_i^2) \tilde Y_i = 0
	\end{equation*}
	i.e. $\{\mathcal{R}(\tilde Y_i)\}$ are linearly dependent. This contradicts Lemma \ref{lem:dualProperties} and \eqref{eqn:KKT4} which suggest linear independence, as each $\mathcal{R}(Y_i)$ is of rank one and together they span the whole space $\mathbb{R}^{k+1}$. Hence, $\bar Y_i = \tilde Y_i \; \forall i = \{0, \dots, k \}$, i.e. the dual solution is unique.

	Finally, the uniqueness of the primal solution can be established by the uniqueness for the dual solution. Indeed, summing \eqref{eqn:KKT3} gives
	\begin{equation*}
		\sum_{i=0}^{k} \bar Y_i (\bar M-C_i) = 0
		\stackrel{\eqref{eqn:KKT4}}{\Leftrightarrow} \Omega \bar M =
		\sum_{i=0}^{k} \bar Y_i C_i
		\Leftrightarrow \bar M = \Omega^{-1} \sum_{i=0}^{k} \bar
		Y_i C_i.
	\end{equation*}
\end{proof}

\subparagraph{{\it Proof of Theorem \ref{thm:gradientTheorem}}:}~\\[1ex]
Given the preceding support results of this section, we are now in a position to prove Theorem \ref{thm:gradientTheorem}.

First, we will show that $p: \mathbb{S}_{++}^{k+1} \mapsto \mathbb{R}$ is differentiable on its domain. First, note that $p$ is convex due to \citep[Corollary 3.32]{Rockafellar2009} and hence continuous on $\mathrm{int}(\mathrm{dom}\; p) = S_{++}^{k+1}$ \citep[Theorem 2.35]{Rockafellar2009}. Also, note that due to Lemma \ref{lem:appSupportLemma1} the regular directional derivatives \citep[Theorem 8.22]{Rockafellar2009} of $p$ exist and are a linear map of the direction.  Hence, according to \citep[Theorem 9.18 (a, f)]{Rockafellar2009} p is differentiable on $\mathbb{S}_{++}^{k+1}$.

Consider now the derivative of the solution of \eqref{eqn:PrimalOptProb} when perturbing $\Omega$ across a specific direction $\bar \Omega $, i.e. $\partial q(\gamma; \Omega) / \partial \gamma$ with $q(\gamma; \Omega) = p(\Omega + \gamma \bar \Omega)$. Lemma \ref{lem:appSupportLemma1} shows that $\partial q(\gamma; \Omega) / \partial \gamma|_{0} = \langle \bar \Omega, \bar M \rangle$ when $\Omega \succ 0$. The proof then follows element-wise from Lemma \ref{lem:appSupportLemma1} by choosing $\bar \Omega$ a sparse symmetric matrix with $\bar \Omega_{(i,j)} = \bar \Omega_{(j,i)} = 1/2$ the only nonzero elements. \hfill\BlackBox

\section{Derivative of the Optimal Solution}
\label{app:Mdot}
In this section we will provide a constructive proof of Theorem \ref{thm:dotM}, and show in particular that $\dot{\bar M}$, the directional derivative of $\bar M(\Omega)$ when perturbing $\Omega$ linearly across a direction $\bar \Omega \in \mathbb{S}^{k+1}$, can be computed by solution of the following linear system
\begin{equation*}
	\begin{aligned}
		\begin{bmatrix}
		\bar S_0                      &        & 0                             & (\bar y_0^T \otimes I) \Pi_+ \\
																	& \ddots &                               & \vdots                       \\
		0                             &        & \bar S_k                      & (\bar y_k^T \otimes I) \Pi_+ \\
		\Pi(\bar y_0 \oplus \bar y_0) & \dots  & \Pi(\bar y_0 \oplus \bar y_0) & 0                            \\
		\end{bmatrix}
		\begin{bmatrix}
		\dot{\bar y}_0 \\
		\vdots \\
		\dot{\bar y}_k \\
		\vcu(\dot{\bar M})
		\end{bmatrix}
		=
		\begin{bmatrix}
		0 \\
		\vdots \\
		0 \\
		\vcu(\bar \Omega)
		\end{bmatrix}
	\end{aligned}
\end{equation*}
where
\begin{itemize}
	\item $\bar S_i = \bar M_i - C_i, \quad i = 0, \dots,  k$
	\item $y_i$ is defined such that $\bar Y_i = \bar y_i \bar y_i^T$, i.e. the non-zero eigenvector of the Lagrange multiplier $\bar Y_i$, which was shown to be unique in Lemma \eqref{lem:dualProperties}.
	\item $\vc(\cdot)$, is the operator that stacks the columns of a matrix to a vector, and $\vcu(\cdot)$ the operator that stacks only the upper triangular elements in a similar fashion
	\item $\Pi$, is the matrix that maps $\vc(Z) \mapsto \vcu(Z)$ where $Z\in \mathbb{S}$, and $\Pi^+$ performing the inverse operation.
	\item $\otimes$ and $\oplus$ denote the Kronecker product and sum respectively.
\end{itemize}
\begin{proof}
	Lemma \ref{lem:appSupportLemma1} of Appendix \ref{app:DerivativeCalculations}, guarantees that the solutions of \eqref{eqn:PrimalOptProb} and \eqref{eqn:DualOptProb} are unique for any $\Omega \succ 0$. Hence, according to \cite{Freund2004}, the directional derivatives $\dot{\bar M}, \dot{\bar Y}_i$ of $\bar M, \bar Y$ along the perturbation $\bar \Omega$ exist and are given as the unique solution to the following overdetermined system:
	\begin{equation}
		\begin{aligned}
			\sum_{i=0}^{k} \dot{\bar Y}_i                   & = \bar \Omega                               \\
			\dot{\bar Y}_i \bar S_i - \bar Y_i \dot{\bar M} & = 0                                         \\
			\dot{\bar M}, \dot{\bar Y}_i                    & \in \mathbb{S}^{k+1} \quad i = 0, \dots, k,
			\label{eqn:derivativesFreund}
		\end{aligned}
	\end{equation}
	The above linear system is over-determined, and has symmetric constraints. This renders standard solution methods, such as LU decomposition, inapplicable. Expressing the above system in a standard matrix form results in a matrix with $\mathcal{O}(k^4)$ zeros, which makes its solution very costly.

	To avoid theses issues, we will exploit Lemma \ref{lem:dualProperties} of Appendix \ref{app:DerivativeCalculations} to express the dual solution $\bar Y_i$ compactly as $\bar Y_i = \bar y_i \bar y_i^T$. One can choose a differentiable mapping $\bar Y_i(t) \mapsto \bar y_i(t)$, e.g. $\bar y_i(t) = \sqrt{\lambda_i(t)} u_i(t)$ where $\lambda_i(t)$ is the only positive eigenvalue of $\bar Y_i(t)$ and $u_i(t)$ its corresponding unit-norm eigenvector. Differentiability of $\bar y_i(t)$ comes from differentiability of $\bar Y_i(t)$, $\lambda_i(t)$, $u_i(t)$ \citep{Kato1976} and positivity of $\lambda_i(t)$ due to Lemma \ref{lem:dualProperties}. The chain rule then applies for $\dot{\bar Y}_i = \dot{\bar y}_i \bar y_i^T + \bar y_i \dot{\bar y}_i^T$.
	Hence \eqref{eqn:derivativesFreund} can be expressed as
	\begin{align}
		\sum_{i=0}^{k}
		\dot{\bar y}_i \bar y_i^T + \bar y_i \dot{\bar y}_i^T
		  & = \bar \Omega
		\label{eqn:sumY} \\
		(\dot{\bar y}_i \bar y_i^T + \bar y_i \dot{\bar y}_i^T) \bar S_i
		- \bar y_i \bar y_i^T \dot{\bar M}
		  & = 0, \quad i = 0, \dots, k
		\label{eqn:bigY}
	\end{align}
	Exploiting $\bar y_i^T \bar S_i = 0$ from \eqref{eqn:KKT3} and that $y_i \neq 0$ gives
	\begin{equation}
		\eqref{eqn:bigY} \Leftrightarrow \bar y_i(\dot{\bar y}_i^T\bar S_i - \bar y_i^T \dot{\bar M}) = 0
		\stackrel{y_i \neq 0}{\Longleftrightarrow}
		\dot{\bar y}_i^T\bar S_i - \bar y_i^T \dot{\bar M}
		, \quad i = 0, \dots, k
		\label{eqn:smallY}
	\end{equation}

	We can express equations \eqref{eqn:sumY} and \eqref{eqn:smallY} into the standard matrix form by using the $\mathrm{vec}$ operator and the identity $\vc(AXB) = (B^T \otimes A) \vc(X)$, which gives
	\begin{align*}
		\sum_{i=0}^{k}
		(\bar y_i \otimes I + I \otimes \bar y_i) \dot{\bar y}_i
		  & = \mathrm{vec}(\bar \Omega) \\
		S_i \dot{\bar y}_i - (\bar y_i^T \otimes I) \mathrm{vec}(\dot{\bar M})
		  & = 0, \quad i = 0, \dots, k
	\end{align*}
	Finally, eliminating the symmetric constraint via $\vcu(\cdot)$, $\Pi$ and $\Pi^+$ gives:
	\begin{equation}
		\label{eqn:directionalDerivativesM}
		\begin{aligned}
			\sum_{i=0}^{k}
			\Pi(\bar y_i \otimes I + I \otimes \bar y_i) \dot{\bar y}_i
			  & = \mathrm{vec_u}(\bar \Omega)
			\\
			S_i \dot{\bar y}_i - (\bar y_i^T \otimes I) \Pi^+\mathrm{vec_u}(\dot{\bar M})
			  & = 0, \quad i = 0, \dots, k
		\end{aligned}
	\end{equation}
	leading to the suggested linear system. The system is square and non-singular since it is equivalent to \eqref{eqn:derivativesFreund} which has a unique solution.

	Finally, it remains to show that $M(\Omega)$ is a differentiable for any $\Omega \succ 0$. First note that $M$ is outer semicontinuous \citep[Definition 5.4]{Rockafellar2009} on $\mathbb{S}_{++}^{k+1}$ as, according to Theorem \eqref{thm:gradientTheorem}, $p$ is continuous on $\mathbb{S}_{++}^{k+1}$. Since $M$ is unique on $\mathbb{S}_{++}^{k+1}$ it is also continuous. Finally, note that due to Equation \eqref{eqn:directionalDerivativesM} the regular directional derivatives \citep[Theorem 8.22]{Rockafellar2009} of $M$ exist and are a linear map of the direction.  Hence, according to \citep[Theorem 9.18 (a, f)]{Rockafellar2009} M is differentiable on $\mathbb{S}_{++}^{k+1}$.
\end{proof}

\section{Non-differentiability of QEI-OEI} \label{app:nonDifferentiability}
\begin{proposition}
	\label{prop:nonDifferentiability}
	The multipoint expected improvement
	\begin{equation*}
		\begin{gathered}
			\alpha(X) = \mathbb{E}[\min(y_{1}(X), \dots, y_{k}(X), \underline{y^d}) |
			\mathcal{D}] - \underline{y^d} \\
			\quad \text{with} \; y | \mathcal{D} \sim \mathcal{N}\bigl(\mu(X),
			\Sigma(X)\bigr)
		\end{gathered}
	\end{equation*}
	is non-differentiable for a smooth, ``non-trivial''\footnote{A kernel that does not induce a constant acquisition function.} kernel function that implies perfect correlation between $y_1(x_1)$ and $y_2(x_2)$ when $x_1 = x_2$.
\end{proposition}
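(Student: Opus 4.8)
The plan is to exhibit a single point at which $\alpha$ fails to be differentiable. It is enough to do this for batch size $k=2$: for $k>2$ one pins the remaining $k-2$ points at distinct generic locations and the argument below is unchanged except that the smooth ``comparison term'' that appears becomes a $(k-1)$-point expected improvement, which is still smooth at the base point because its covariance is positive definite there. Fix a point $x^\ast$ that is not a training input, so the posterior variance $\sigma_\ast^2$ at $x^\ast$ is positive, and a direction $v$ (chosen below). I would study the scalar function $t\mapsto\alpha\bigl(X(t)\bigr)$ with $X(t):=(x^\ast,\,x^\ast+tv)$ near $t=0$: if $\alpha$ were differentiable at $(x^\ast,x^\ast)$ this composition would be differentiable at $0$, so it suffices to produce a corner there.

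The device is the identity $\min(y_1,y_2)=\tfrac12(y_1+y_2)-\tfrac12|y_1-y_2|$. Writing $c:=\underline{y^d}$, $W_t:=\tfrac12(y_1+y_2)$, $D_t:=\tfrac12|y_1-y_2|\ge0$, a short case split of $\min(W_t-D_t,c)-\min(W_t,c)=-D_t\,\mathbf{1}_{\{W_t\le c\}}-\bigl(D_t-(W_t-c)\bigr)^{+}\mathbf{1}_{\{W_t>c\}}$ yields
\[
	\alpha\bigl(X(t)\bigr)=\beta(t)-\mathbb{E}\!\left[D_t\,\mathbf{1}_{\{W_t\le c\}}\right]+R(t),\qquad \beta(t):=\mathbb{E}\bigl[\min(W_t,c)\bigr]-c,
\]
with $|R(t)|\le\mathbb{E}\!\left[D_t\,\mathbf{1}_{\{0<W_t-c<D_t\}}\right]$. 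Here $\beta(t)$ is just the ordinary one-point expected improvement for the Gaussian $W_t\sim\mathcal N(\mu_W(t),\sigma_W^2(t))$; since the kernel and mean are smooth, $\mu_W,\sigma_W^2$ are smooth in $t$ with $\sigma_W^2(0)=\sigma_\ast^2>0$, so $\beta\in C^\infty$ near $0$. Thus the question reduces to the $t\to0$ asymptotics of $\mathbb{E}[D_t\,\mathbf{1}_{\{W_t\le c\}}]$, together with the estimate $R(t)=o(|t|)$.

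The crux is that the posterior variance $\theta(t)^2:=\operatorname{Var}(y_1-y_2)=\Psi(x^\ast,x^\ast)-2\Psi(x^\ast,x^\ast+tv)+\Psi(x^\ast+tv,x^\ast+tv)$, with $\Psi$ the posterior covariance kernel, vanishes \emph{to second order} at $t=0$: it is $0$ by perfect correlation, and $(\theta^2)'(0)=0$ by the symmetry $\Psi(z,z')=\Psi(z',z)$, so $\theta(t)^2=at^2+o(t^2)$ with $a=v^\top B v$ and $B:=\nabla_1\nabla_2\Psi(x^\ast,x^\ast)\succeq0$ the covariance of the posterior gradient at $x^\ast$. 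For a smooth, non-trivial kernel $B\neq0$ for a generic $x^\ast$, so I may fix $x^\ast,v$ with $a>0$. Passing to $U:=y_1-y_2$, $V:=y_1+y_2$ (jointly Gaussian), so that $\mathbb{E}[D_t\mathbf{1}_{\{W_t\le c\}}]=\tfrac12\mathbb{E}[|U|\mathbf{1}_{\{V\le2c\}}]$, and rescaling by $|t|$: since $\theta(t)/|t|\to\sqrt a$ and $\mathbb{E}[U]/|t|\to\operatorname{sign}(t)\,b$ with $b:=-\nabla\mu(x^\ast)^\top v$, the variable $U/|t|$ converges in law (jointly with the standardised $V$) to $\operatorname{sign}(t)\,b+\sqrt a\,G$, $G\sim\mathcal N(0,1)$, independent of the limiting $V$ — cleanest if $v$ is also taken orthogonal to $\nabla_x\Psi(x,x)|_{x^\ast}$, which is possible; otherwise the limiting correlation only multiplies the answer by a bounded factor. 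Dominated convergence then gives $|t|^{-1}\mathbb{E}[D_t\mathbf{1}_{\{W_t\le c\}}]\to K$ with the \emph{same} limit $K$ from both sides, because $-G\overset{d}{=}G$, and $K>0$ since $b+\sqrt a\,G$ is a non-degenerate Gaussian ($a>0$) and $\mathbb{P}(W_0\le c)>0$ ($c=\underline{y^d}$ finite, $\sigma_\ast^2>0$). Finally, splitting the bound on $R(t)$ at scale $|t|^{1/2}$ — using $\mathbb{E}[D_t^2]=O(t^2)$, Markov's inequality, and the uniformly bounded density of $W_t$ near $t=0$ — gives $R(t)=O(|t|^{5/4})=o(|t|)$.

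Assembling, $\alpha\bigl(X(t)\bigr)=\beta(t)-K|t|+o(|t|)$ with $\beta$ differentiable at $0$ and $K>0$, so $t\mapsto\alpha(X(t))$ has left and right derivatives $\beta'(0)+K\neq\beta'(0)-K$ at $0$; hence $\alpha$ is not differentiable at $(x^\ast,x^\ast)$, which is the claim. I expect the main obstacle to be the third paragraph: verifying that $\operatorname{Var}(y_1-y_2)$ really has a vanishing first derivative but a strictly positive second-order coefficient along a suitable $v$ (so that $\theta(t)\asymp|t|$, the source of the non-smoothness), and then making the $|t|^{-1}$-rescaled limit rigorous via uniform integrability of the rescaled Gaussians; by comparison the smoothness of $\beta$ and the $o(|t|)$ bound on $R$ are routine bookkeeping.
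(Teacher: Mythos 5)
Your argument is correct in substance and reaches the right conclusion, but it takes a genuinely different and much more computational route than the paper. The paper's proof is a short symmetry/envelope argument: it uses only that $\alpha_2(x_1,x_2)=\alpha_2(x_2,x_1)$, that $\alpha_2(x_1,x_2)\le\alpha_1(x_1)$ with equality on the diagonal (perfect correlation), and that there exists a point where the one-point EI $\alpha_1$ is differentiable and non-stationary along some direction $\bar x$. Differentiability of $\alpha_2$ at $(x,x)$ would force its directional derivative along $(\bar x,-\bar x)$ to equal $D_{(\bar x,0)}\alpha_2-D_{(0,\bar x)}\alpha_2=0$ by symmetry, while the envelope relation forces it to equal $d\alpha_1(x+t\bar x)/dt|_{0}<0$ --- a contradiction, with no asymptotics needed. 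Your proof instead extracts the kink explicitly: via $\min(y_1,y_2)=\tfrac12(y_1+y_2)-\tfrac12|y_1-y_2|$ and the observation that $\operatorname{Var}(y_1-y_2)$ vanishes to exactly second order along the diagonal (first order killed by kernel symmetry, second-order coefficient $v^\top B v$ with $B$ the posterior-gradient covariance), you obtain $\alpha(X(t))=\beta(t)-K|t|+o(|t|)$ with $K>0$. This buys more: it quantifies the magnitude of the corner, and it works even at points where $\alpha_1$ is stationary (you only need $v^\top Bv>0$ and $\mathbb{P}(y(x^\ast)\le\underline{y^d})>0$, the latter being automatic), whereas the paper's argument needs a non-stationary point of $\alpha_1$. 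The costs are the uniform-integrability and remainder bookkeeping (note your split should be at scale $|t|^{3/4}$, not $|t|^{1/2}$, to get $R(t)=O(|t|^{5/4})$; also the two-sided limits $K_\pm$ need not coincide when $U_t/|t|$ and $V_t$ are asymptotically correlated, but only $K_++K_->0$ is actually needed), and the fact that your argument is tailored to the explicit form of EI, so unlike the paper's abstract argument it does not transfer verbatim to OEI, for which the paper's symmetry-plus-envelope reasoning applies unchanged.
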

\begin{proof}
	For simplicity, we will consider the two-points expected improvement
	\begin{equation*}
		\begin{gathered}
			\alpha_2(x_1, x_2) = \mathbb{E}[\min(y_{1}, y_{2}, \underline{y^d}) |
			\mathcal{D}] - \underline{y^d} \\
			\quad \text{with} \; y | \mathcal{D} \sim \mathcal{N}\bigl(\mu([x_1, x_2]),
			\Sigma([x_1, x_2])\bigr)
		\end{gathered}
	\end{equation*}
	We will show that the directional derivatives of $\alpha_2$ can be inconsistent, i.e. not a linear map of the direction.

	Assume that $a_2$ is differentiable and consider an $x$ for which the one-point expected improvement
	\begin{equation*}
		\alpha_1(x_1) = \mathbb{E}[\min(y_1, \underline{y^d}) |
		\mathcal{D}] - \underline{y^d} \\
		\quad \text{with} \; y | \mathcal{D} \sim \mathcal{N}\bigl(\mu([x_1]),
		\Sigma([x_1])\bigr).
	\end{equation*}
	is differentiable and non-stationary, i.e. there exists $\bar x$ such that $d\alpha_1(x + t \bar x)/dt < 0$. We can verify that such a point always exists if the kernel is smooth and non-trivial by examining \cite[Equation (6.2.2)]{Osborne2010}. Due to symmetry of $\alpha_2$ we have
	\begin{equation*}
		\frac{d\alpha_2(x + t \bar x, x)}{dt} =
		\frac{d\alpha_2(x, x + t \bar x)}{dt}
	\end{equation*}
	and as $\alpha_2(x, y) < \alpha_1(x) \leq 0$ and $\alpha_2(x, x) = \alpha(x) \; \forall x, y \in \mathbb{R}^n$, we get
	\begin{equation*}
		\frac{d\alpha_2(x + t \bar x, x)}{dt} \Bigg |_{t=0} \leq
		\frac{d\alpha(x + t \bar x)}{dt} \Bigg |_{t=0}  < 0
	\end{equation*}
	and
	\begin{equation*}
		\frac{d\alpha_2(x + t, x - t \bar x)}{dt} \Bigg |_{t=0} \leq
		\frac{d\alpha(x + t \bar x)}{dt} \Bigg |_{t=0} < 0
	\end{equation*}
	Hence we arrive at a contradiction, as the directional derivatives of $\alpha_2$ are not a linear map of the direction
	\begin{equation*}
		\frac{d\alpha_2(x + t, x - t \bar x)}{dt} \Bigg |_{t=0} \neq
		\frac{d\alpha_2(x + t \bar x, x)}{dt} \Bigg |_{t=0} -
		\frac{d\alpha_2(x, x - t \bar x)}{dt} \Bigg |_{t=0} = 0
	\end{equation*}
	It is trivial to see that the same argument holds for OEI.
\end{proof}

\vskip 0.2in
\bibliography{../../Bibliography/bibliography}

\end{document}